\newcommand{\xmark}{\ding{55}}%
\title{Uncovering the Spectral Bias in \\ Diagonal State Space Models}
\author{%
 Ruben Solozabal\\
  MBZUAI\\
  \texttt{ruben.solozabal@mbzuai.ac.ae} \\
  \And
  Velibor Bojkovic \\
  MBZUAI \\
  \texttt{velibor.bojkovic@mbzuai.ac.ae} \\
  \AND
  Hilal AlQuabeh \\
  MBZUAI \\
  \texttt{hilal.alquabeh@mbzuai.ac.ae} \\
  \And
  Kentaro Inui \\
  MBZUAI \\
  \texttt{kentaro.inui@mbzuai.ac.ae} \\
  \And
  Martin Takáč \\
  MBZUAI \\
  \texttt{Takac.MT@gmail.com} \\
}
\begin{document}

\maketitle

\begin{abstract}
Current methods for initializing state space models (SSMs) parameters mainly rely on the \textit{HiPPO framework}, which is based on an online approximation of orthogonal polynomials. Recently, diagonal alternatives have shown to reach a similar level of performance while being significantly more efficient due to the simplification in the kernel computation. However, the \textit{HiPPO framework} does not explicitly study the role of its diagonal variants. In this paper, we take a further step to investigate the role of diagonal SSM initialization schemes from the frequency perspective. Our work seeks to systematically understand how to parameterize these models and uncover the learning biases inherent in such diagonal state-space models. Based on our observations, we propose a diagonal initialization on the discrete Fourier domain \textit{S4D-DFouT}. The insights in the role of pole placing in the initialization enable us to further scale them and achieve state-of-the-art results on the Long Range Arena benchmark, allowing us to train from scratch on very large datasets as PathX-256.
\end{abstract}

\section{Introduction}
\begin{wrapfigure}{c}{5.5cm}
      \centering      
      \vskip-20pt
      \includegraphics[trim = {0cm 0cm 5cm 0cm}, clip, width=0.5\textwidth]{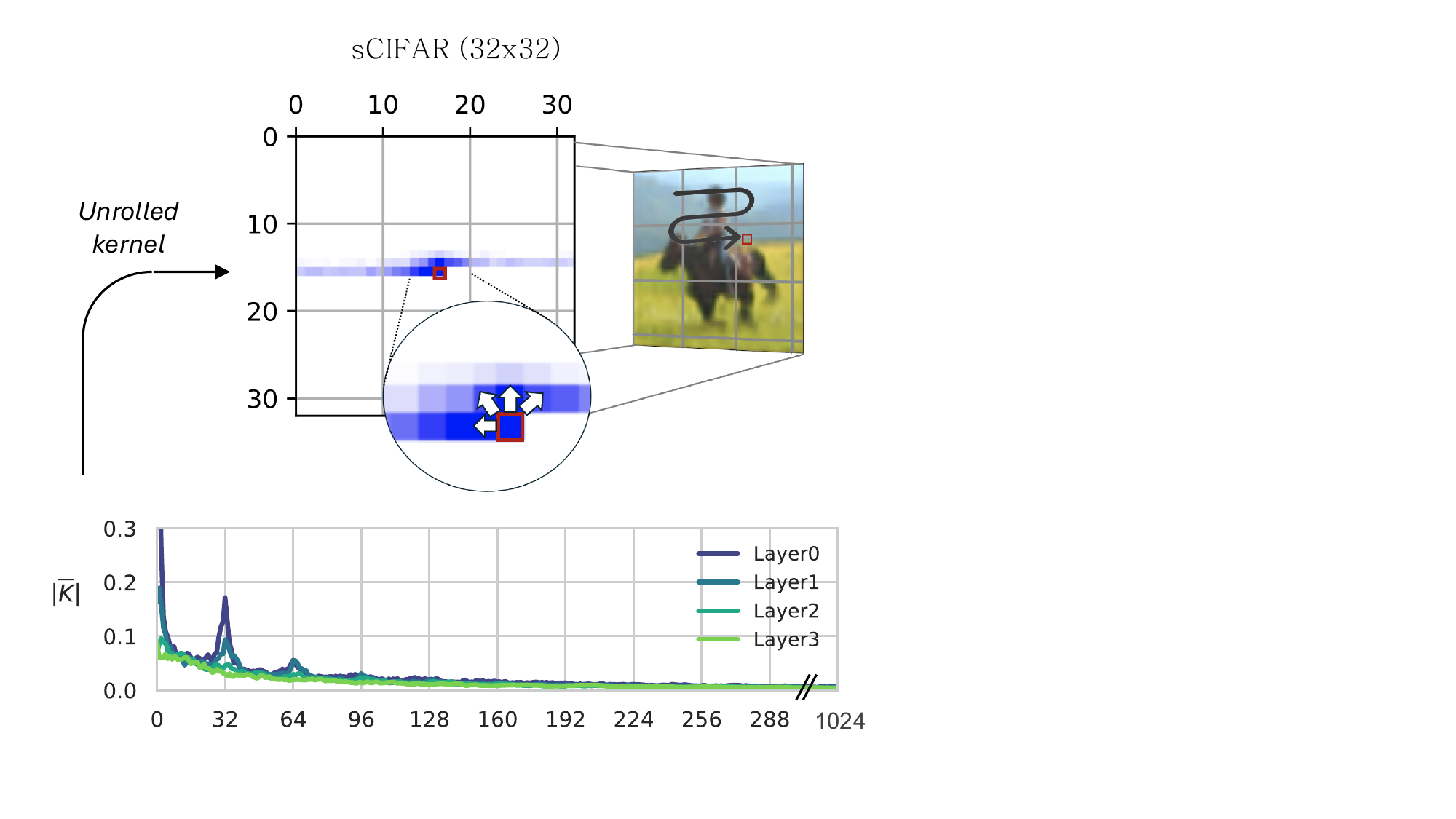}
      \caption{The kernels learned by S4D on \texttt{sCIFAR} present a —“local attention” profile—. When unrolled, these kernels align with positions corresponding to the vicinities of the pixel being attended.}
      \label{fig:local_attention}
      \vskip-45pt
\end{wrapfigure} 

State space models (SSMs) have recently emerged as a principled and scalable means of modeling long sequences across diverse domains—including image processing~\cite{DBLP:conf/icml/ZhuL0W0W24, DBLP:journals/corr/abs-2405-19450,DBLP:conf/nips/NguyenGGDSDBR22}, time-series forecasting~\cite{DBLP:conf/iclr/ZhangSPDGR23}, and natural language understanding~\cite{DBLP:conf/iclr/FuDSTRR23,DBLP:journals/corr/abs-2312-00752, DBLP:conf/icml/DaoG24}. At their core, SSMs perform a sequence-to-sequence mapping via a long-range convolution kernel that is parameterized under a continuous-time linear dynamical system. This formulation captures dependencies at multiple timescales and admits strong stability guarantees. Crucially, the “HiPPO matrix”~\cite{DBLP:conf/nips/GuDERR20} provides a principled method to structure state transition matrices that performs online compression of the input stream by projecting onto an orthogonal polynomial basis. However, naively propagating this dense state through a sequence of length $L$ incurs in $\mathcal{O}(N^2L)$ cost —where $N$ is the state dimension— making direct implementation impractical for very long sequences. To overcome this bottleneck, the Structured State Space Sequence (S4)~\cite{DBLP:conf/iclr/GuGR22} model introduces a normal plus low-rank decomposition of the transition matrix, which reduces the computational complexity while maintaining the expressive capacity~\cite{DBLP:conf/iclr/GuJTRR23}.

Building on the foundations of S4, recent diagonal variants \cite{DBLP:conf/nips/0001GB22,DBLP:conf/nips/GuG0R22,DBLP:conf/icml/OrvietoSGFGPD23,DBLP:conf/iclr/SmithWL23}, have shown that restricting the state matrix to be diagonal still preserves the performance. By introducing complex-valued initialization schemes~\cite{DBLP:conf/nips/GuG0R22}, these diagonal SSMs achieve performance comparable to the original S4 architecture while being more computationally efficient, as the computation of the kernel simplifies to a Vandermonde matrix multiplication. Nevertheless, while the S4 framework has a mathematical interpretation for addressing long-range dependencies, the efficacy of its diagonal variants remains theoretically unexplained. 

In this work, we take a step further by investigating the initialization of diagonal SSMs from a frequency perspective~\cite{DBLP:conf/icml/ParnichkunMMSHL24}. We aim to uncover how the initialization of the state matrix, particularly the distribution of the imaginary part of its eigenvalues, directly influences the system's ability to capture long-range dependencies. Our analysis reveals that current schemes highly depend on the input sequence exhibiting a well-defined inherent timescale that aligns with the model's discretization step $\Delta$~\cite{DBLP:conf/iclr/GuJTRR23}.  Since such alignment is unknown a priori, existing models often compensate by spreading the poles across a wide frequency range, resulting in over-parameterization~\cite{DBLP:conf/nips/GwakMKP24}, non-uniform spectral sensitivity, and aliasing artifacts.

Building on this analysis, we introduce S4D-DFouT, an initialization in the discrete domain that explicitly ensures a uniform coverage of the frequency spectrum independently of the selection of the discretization step $\Delta$. By placing poles directly in the discrete domain, we can reduce the sensitivity these models present to the frequency response without compromising its ability to capture both short- and long-range effects.

To summarize, the key contributions of this paper are threefold:
\begin{itemize}[noitemsep, topsep=-2pt,leftmargin=5.5mm]
    \item \textbf{Frequency-aware analysis of Diagonal SSMs:}  The theoretical understanding of the roles played by the initialization schemes on diagonal SSMs is still lacking. Therefore, we systematically describe diagonal SSMs from a frequency perspective. We analyze how different initialization schemes affect the model's ability to represent temporal dependencies (Section \ref{sec:preliminaries}).
    \item \textbf{Proposal of S4D-DFouT initialization:} Our observations indicate that current models are highly sensitive to the discretization. We propose a novel approach, S4D-DFouT, directly in the discrete domain that explicitly designs the pole distribution to optimize spectral coverage. This strategy eliminates the existing coupling between the decay and frequency selection, resulting in a more robust and easy-to-scale initialization. (Section~\ref{sec:main_results}).
    \item \textbf{SSMs initialization synchronization enhances model efficiency:} Finally, we propose a synchronized initialization across all the SSMs in a given layer. By coordinating their pole placements, the entire layer can be jointly initialize. Our experiments demonstrate that such frequency-aware synchronization enables diagonal SSMs to scale more effectively and outperform previous methods on Long Range Arena (LRA)~\cite{DBLP:conf/iclr/Tay0ASBPRYRM21} enabling to train from scratch on \texttt{PathX-256} (Section~\ref{sec:experiments}).
\end{itemize}

\section{Related literature}

Initial approaches to sequence modeling—such as recurrent neural networks (RNNs) and long short-term memory (LSTM)~\cite{hochreiter1997long} models—achieved notable success but encountered significant limitations when modeling long-range dependencies, primarily due to vanishing and exploding gradients~\cite{DBLP:conf/icml/PascanuMB13,DBLP:conf/nips/MikhaeilMD22}. These challenges motivated the search for efficient alternative architectures. Legendre Memory Units (LMUs)~\cite{DBLP:conf/nips/VoelkerKE19} then introduced a non-trainable, a continuous-time SSM parameterized using Legendre polynomial bases to encode long-term memory. Building on this, the HiPPO framework provided a principled method to construct state matrices with trainable state matrices, leading to a successful family of long-range convolution models~\cite{DBLP:conf/iclr/GuGR22,DBLP:conf/iclr/SmithWL23,DBLP:conf/icml/PoliMNFDBBER23,DBLP:conf/iclr/HasaniLWCAR23}.

Diagonal variants of SSMs have recently emerged as a simplification of the DPLR‐based S4 architecture~\cite{DBLP:conf/nips/0001GB22,DBLP:conf/nips/GuG0R22,DBLP:conf/icml/OrvietoSGFGPD23}. The Diagonal State Spaces (DSS)~\cite{DBLP:conf/nips/0001GB22} method empirically demonstrated that simply removing the low-rank term from the HiPPO DPLR representation to form a diagonal HiPPO matrix preserves performance. Building on this insight, the S4D framework~\cite{DBLP:conf/nips/GuG0R22} systematically explores diagonal initialization. Nevertheless, while the S4 framework has a mathematical interpretation for addressing long-range dependencies, the efficacy of its diagonal variants remains theoretically unexplained.

Most recently,~\cite{DBLP:journals/corr/abs-2411-19455} provided a data-centric perspective on SSM initialization. Their work emphasizes the importance of aligning model initialization with the autocorrelation structure of the input data. We push beyond this data-driven criterion by investigating the initialization of diagonal SSMs from a frequency perspective. We observe that current initialization schemes—whether based on HiPPO inverse-frequency laws, or linear grids—drive the model to learn convolutional kernels whose resonance aligns with the principal frequential components of the data. As illustrated in Fig.~\ref{fig:local_attention}, the kernels learned by SSMs act as localized, image-sized filters, effectively aggregating spatially local features rather than capturing truly global, long-range interactions. Previous works~\cite{DBLP:conf/iclr/MaZKHGNMZ23} have leveraged exactly this effect on Transformers~\cite{DBLP:conf/nips/VaswaniSPUJGKP17,DBLP:conf/nips/LiJXZCWY19} to perform well on LRA. By combining attention with an exponential moving average, it injects a built-in position-aware encoding mechanism.

Our findings show that current initialization schemes rely heavily on coincidental alignment between the dominant timescales in the input and the poles in the system. Since such alignment is rarely known a priori, existing models often compensate by spreading poles across a wide frequency range, resulting in over-parameterization. Recent works have shown that large portion of the modes in these models can be pruned with a minimal effect in the model performance~\cite{DBLP:conf/nips/GwakMKP24}. 


Building on these frequency‐domain insights, we developed the S4D-DFouT initialization. A universal initialization directly in the discrete domain that decouples the interdependency  between the decay rate and the oscillation frequency. This design not only liberates from the sensitivity in the selection of $\Delta$, but also produces kernels that maintain consistent expressivity across a wide range of frequencies. Leveraging DFouT, we demonstrate, for the first time, successful training from-scratch on \texttt{PathX-256}—a task that previous state-of-the-art methods could only tackle after extensive self-pretraining  \cite{DBLP:conf/iclr/AmosB024}.

\section{Preliminaries}
\label{sec:preliminaries}
In this section, we briefly introduce the diagonal SSM and the problem setting we considered throughout this paper. Specifically, we analyze the single-input single-output (SISO) variant of the S4 model, where the state transition matrix is defined over the field of complex numbers $\mathbb{C}$, while the model outputs reside in the real domain $\mathbb{R}$. The system dynamics is described with the following equations:
\begin{equation}
  \frac{d}{dt}\,h(t) = \Lambda\,h(t) + B\,x(t), \quad\quad\quad
  y(t) = \Re\bigl(C^\top h(t)\bigr), \quad t > 0.  
  \label{eq:1}
\end{equation}
Here, $t\in\mathbb{R}_{\geq 0}$ is the continuous time variable, and  $\Re(\cdot)$ represents the real part of complex entry vector. The input and output signals $x(t),\, y(t) \in \mathbb{R}$ are real valued, while the hidden state $h(t) \in \mathbb{C}^N$ evolves in the complex domain. 
In particular, we consider the setting where the state matrix is diagonal $\Lambda = \mathrm{diag}(\lambda_1, \lambda_2, \ldots, \lambda_N)$.  Under these settings, the input-output relation in (\ref{eq:1}) is explicitly given by the integral
\begin{equation}
    y(t) \;=\; \int_{0}^{t} \Re\!\bigl(C^\top\,e^{\Lambda\,s} B\bigr)\; x\bigl(t - s\bigr)\,\mathrm{d}s,
\end{equation}
i.e. the continuous-time SSM can be represented as a continuous convolution $y(t) = (x * K)(t)$ of the input signal with the kernel function $K = \Re\!\bigl(C^\top\,e^{\Lambda\,s} B\bigr)$. 
\subsection{Discretization}

To obtain a discrete-time version of the continuous diagonal SSM introduced above, one can apply standard discretization techniques. Common approaches include the forward and backward Euler schemes, bilinear transform, and zeroth-order hold (ZOH) method. In this work, we adopt the latter, which yields the following discretized parameters from the continuous-time system:
\begin{equation}\label{eq discretized system}
\overline{\Lambda} \;=\; \exp(\Delta \Lambda),
\quad
\overline{B} \;=\; (\Delta \Lambda)^{-1}\,\bigl(\exp(\Delta \Lambda) - I\bigr)\,\Delta B.
\end{equation}
Under this discretization, the state update and output equations take on recursive form and become:
\begin{equation}
    h[l] = \overline{\Lambda}\,h[l-1]+\overline{B}\,x[l], \quad y[l]=\Re\big(C^\top h[l]\big), \quad \, l=0,\dots,L-1,
\end{equation}
where the input $x = (x[0],\dots,x[L-1])$ now becomes a sequence of numbers of length $L$. The corresponding convolutional or kernel-based view of the model is:
\begin{equation}
    y[l] = \Re\big((x*\overline{K})[l]\big),\quad \text{with}\quad \overline{K}=\big(C^\top\overline{B},C^\top\overline{\Lambda}\overline{B},\dots,C^\top\overline{\Lambda}^{L-1}\overline{B}\big). 
\end{equation}

\subsection{Frequency interpretation of the kernel} \label{sec freq interpretation}
\begin{figure*}[t!]
    \centering
    \begin{subfigure}[t]{0.22\textwidth}
        \centering
        \includegraphics[width = \textwidth]{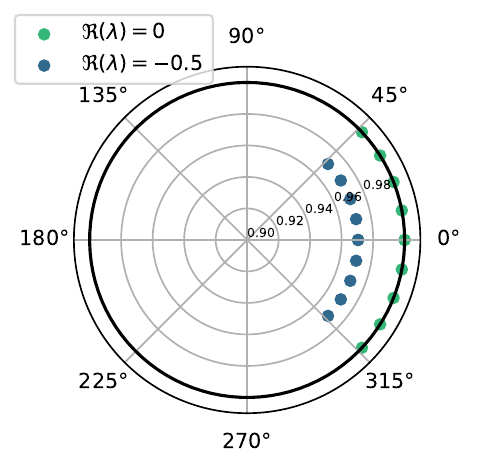}
    \end{subfigure}
    \begin{subfigure}[t]{0.35\textwidth}
        \centering
        \includegraphics[width = \textwidth]{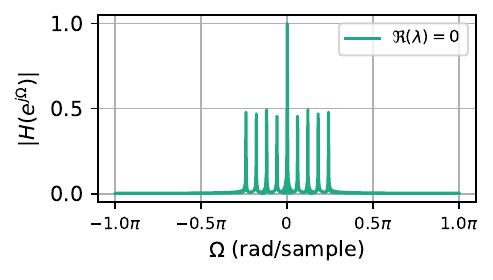}
    \end{subfigure}
    \begin{subfigure}[t]{0.35\textwidth}
        \centering
        \includegraphics[width = \textwidth]{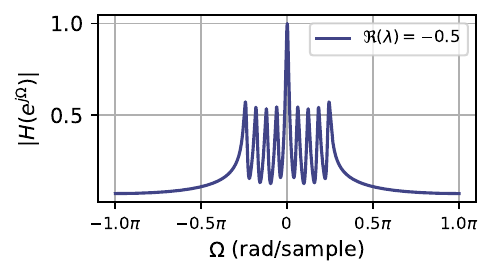}
    \end{subfigure}%
    \caption{Frequency response of a diagonal SSM. \textbf{Left: } An example of pole configuration (i.e. state matrix entries $\lambda_1,\dots,\lambda_N$) for two discrete SSM systems of order $N=10$ exhibiting conjugate symmetry. Following equation \eqref{eq freq response} we plot the corresponding frequency responses in dependence of the angular frequency of the corresponding discretized systems with $\Delta=0.1$ (\textbf{center} and \textbf{right} figures). The system with $\Re(\lambda)=0$ (\textbf{center}) presents a narrower response around the resonant frequencies, whereas in the system with $\Re(\lambda)=-0.5$ (\textbf{right}) those are wider.}
    \label{fig:spectrum}
    \vspace{-5pt}
\end{figure*}

To understand the frequency characteristics of the discrete convolution kernel $\overline{K}$, it is useful to decompose it into a linear combination of basis kernels, where each basis kernel corresponds to a single mode (entry) of the state-space model and captures a distinct frequency component of the system. We define the basis kernels $\overline{K}_n[l]$ as 
\begin{equation}
    \overline{K}_n[l] := \overline{\lambda}_n^l\overline{B}_n,\quad l=0,\dots,L-1,
\end{equation}
where $\overline{\lambda}_n=\exp(-\Delta\, \alpha_n+i\,\Delta \,\omega_n)$ is the $n$th eigenvalue (entry) of the discretized state matrix $\overline{\Lambda}$, and $\overline{B}_n$ is the corresponding component of the input projection vector $\overline{B}$.
The full kernel $\overline{K}[l]$ is then obtained as a linear combination of these basis elements, weighted by the components $C_n$ of the output projection vector $C$:
\begin{equation}
    \overline{K}[l] = \sum_{n=0}^{N-1}C_n\overline{K}_n[l]=\sum_{n=0}^{N-1}C_n\overline{B}_n\overline{\lambda}_n^l.
\end{equation}
From the expressions above, it follows that each individual basis kernel $\overline{K}_n$ corresponds to a scaled damped complex exponential 
with decay rate $\Delta\,\alpha_n$ and discrete angular frequency $\Omega_n:=\Delta\,\omega_n$. As such, each channel $n$ can be seen as a single-pole filter that resonates at frequency $\Omega_n$ with its temporal decay determined by $\Delta\,\alpha_n$.

The discrete-time frequency response of the kernel $\overline{K}[l]$ is computed via the discrete-time Fourier transform as
\begin{equation}\label{eq freq response}
\begin{aligned}
    H(e^{i\,\theta})&:=\sum_{l\geq 0}\overline{K}[l]e^{-i\,\theta\,l}=\sum_{n=0}^{N-1}C_n\overline{B}_n\sum_{l\geq 0}e^{(-\Delta\,\alpha_n+i\,(\Omega_n-\theta))\,l}\\
    &= \sum_{n=0}^{N-1}\frac{C_n\overline{B}_n}{1-e^{-\Delta\,\alpha_n}\,e^{i\,(\Omega_n-\theta)}}.
\end{aligned}
\end{equation}
The last expression shows that each component contributes a resonant peak centered on $\theta=\Omega_n$, with the amplitude controlled by $C_n\overline{B}_n$ and the bandwidth governed by the decay rate $\Delta\,\alpha_n$. When $\Delta\,\alpha_n$ is small, the resonance is sharp, indicating a narrowband frequency response, while when the decay rate is large, the frequency response becomes broader (Fig.~\ref{fig:spectrum}).

\subsection{Parameterization and Initialization of Diagonal State Matrices}

The trainable parameters of the diagonal state-space models are the eigenvalues $\Lambda\in\mathbb{C}^{N\times N}$, and the input/output projections $B,C\in\mathbb{C}^N$, with hidden state initialized as $h(0)=\mathbf{0}$. In practice, $B$ is often initialized as a vector of ones, while the eigenvalues $\Lambda$ are initialized according to different proposed schemes \cite{DBLP:conf/nips/GuG0R22}. 

In the original S4 model, initialization is done in the continuous-time domain where a structured matrix $\Lambda$  is selected (e.g. from HiPPO-LegS), then discretized with a chosen parameter $\Delta$, to obtain $\overline{\Lambda}$. Furthermore, several S4D variants provide simplified diagonal initializations \cite{DBLP:conf/nips/0001GB22,DBLP:conf/nips/GuG0R22}:

\textbf{S4D-LegS.} Uses only the diagonal part $\Lambda^{(D)}$ of the HiPPO-LegS matrix $A$ when the later is put into diagonal plus low-rank form $A=\Lambda^{(D)}+P\,P^\top$, dropping the low-rank correction.

\textbf{S4D-Inv.} Eigenvalues of the state matrix $\Lambda$ are initialized following an inverse-frequency law for the imaginary parts, with fixed real parts as in the left side of \eqref{eq baselines together}.

\textbf{S4D-Lin.} Eigenvalues of the state matrix $\Lambda$ are coming from a linearly spaced frequency grid, with fixed real parts, as in the right side of \eqref{eq baselines together}.
\begin{equation} \label{eq baselines together}
\textbf{S4D-Inv:} \; \lambda_n = -\frac{1}{2} + i\frac{N}{\pi}\left(\frac{N}{2n+1}-1\right) \quad \quad \quad \textbf{S4D-Lin:} \; \lambda_n = -\frac{1}{2} + i\pi n
\end{equation}
For the latter two cases, we make a crucial observation: The diagonal state matrix is initialized in the continuous domain, which during the training is discretized with a learnable parameter $\Delta$. In particular, \textit{the poles of the discretized diagonal system will highly depend on $\Delta$}. This observation serves as a starting motivation for our proposed method. 

 




\begin{figure*}[t!]
    \centering
    \begin{subfigure}[t]{0.25\textwidth}
        \centering
        \includegraphics[width = \textwidth]{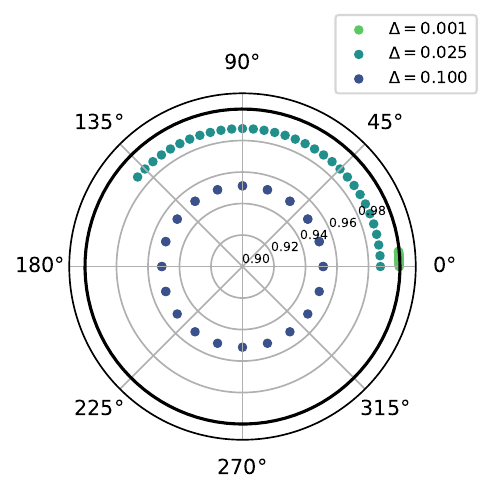}
        \caption{S4D-Lin}
    \end{subfigure}%
    \begin{subfigure}[t]{0.25\textwidth}
        \centering
        \includegraphics[width = \textwidth]{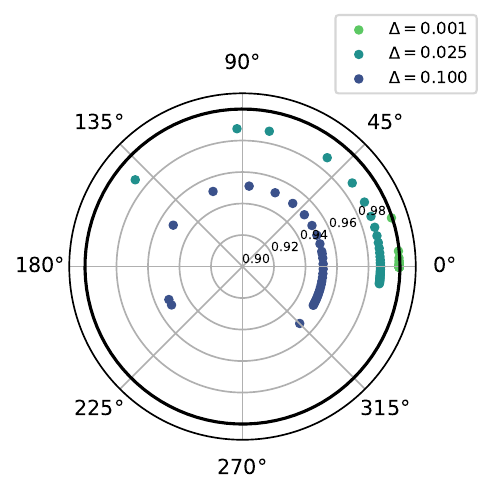}
        \caption{S4D-Inv}
    \end{subfigure}
    \begin{subfigure}[t]{0.25\textwidth}
        \centering
        \includegraphics[width = \textwidth]{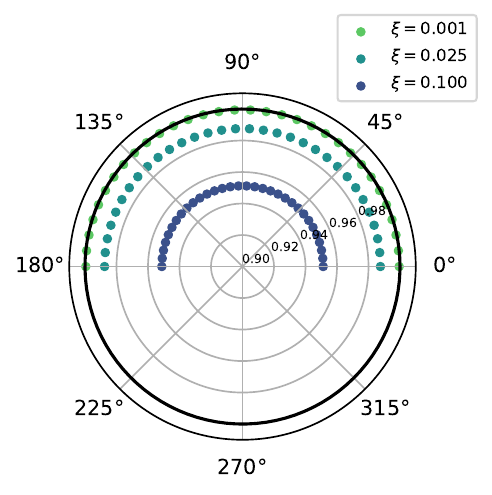}
        \caption{S4D-DFouT}
    \end{subfigure}
    \caption{Visualization of the poles configuration obtained on S4D initializations for a system of order $N=32$ upon the selection of different discretization or decay steps $\Delta, \xi \in\{0.001,0.025, 0.1\}$. In the S4D-Lin and S4D-Inv schemes, $\Delta$ controls both the decay (\textit{radius}) and resonant frequencies (\textit{angle}) of the system. By contrast, S4D-DFouT is designed to ensure even spectral coverage at any decay rate.}
    \label{fig:initializations}
    \vspace{-5pt}
\end{figure*}

\section{Main Results}
\label{sec:main_results}

\subsection{Motivation}

\textbf{Entanglement of decay and frequency via discretization.} Existing initialization schemes, such as the ones described in the previous section, are typically defined in continuous time and then discretized before training. This process introduces fundamental coupling between the decay rate and the oscillation frequency through the discretization parameter $\Delta$. Specifically, both the real and imaginary part in the eigenvalues of the discretized system \eqref{eq discretized system} scale linearly with $\Delta$. Consequently, adjusting the temporal resolution $\Delta$ alters both the decay and resonant frequencies of the system, making the model's behavior sensitive to the discretization grid. This coupling potentially complicates both model tuning and interoperability, as changing the discretization has unintended effects on the frequency response.

\textbf{Limited control over Spectral Coverage.} The frequency components induced by previous initialization strategies often lack direct interpretability or fine-grained control. For instance, S4D-Inv spreads frequencies according to an inverse-law, clustering models disproportionately at low frequencies, while S4D-Lin distributes them linearly but with a fixed decay rate across all modes (Fig.~\ref{fig:initializations} (a), (b)). These choices may not align well with the actual spectral content of input signals, especially when dealing with broadband inputs or tasks requiring uniform resolution across frequency bands. Furthermore, the decay rates in these schemes are typically hardcoded or indirectly determined via the continuous-time representation, offering little flexibility in shaping the temporal memory profile of the model without extensive hyperparameter tuning.


In summary, the discretization step \textit{entangles decay and frequency components in a way that complicates interpretability and control, while also making the initialization sensitive to the choice of time step $\Delta$}. Moreover, these schemes often \textit{impose rigid spectral structures that may not align with the needs for real-world tasks}. These limitations motivate the need for an initialization approach defined directly in the discrete domain.
\subsection{\textit{S4D-DFouT}: A Universal Initialization Scheme of Discrete SSMs}
\label{s4d_dfout}
As we argued in the previous section, a central challenge in diagonal SSMs is choosing the discretization step $\Delta$ when the data's intrinsic timescale is unknown. Common practices sample $\Delta$ at random from a log-uniform range $\Delta_h \;\sim\; \operatorname{Log\mathcal{U}} \bigl(\Delta_{\min},\,\Delta_{\max}\bigr)$, to hedge over possible time resolutions. While this increases the odds that at least some modes align with the true dynamics, it also inflates the model's parameter count and yields uneven coverage of the frequency axis. Furthermore, due to the initialization in the continuous domain, the choice of $\Delta$ becomes critical: if $\Delta$ is too small, all modes collapse into a narrow low-frequency band; if it is too large, modes fold onto one another and alias, thus reducing spectral diversity as the following Lemma suggests.
\begin{restatable}{lemma}{lemmamain}
\label{lem:nyquist_short}
After discretizing with step~$\Delta$, a continuous pole
$\lambda=-\alpha+i\omega$ ($\alpha>0$) becomes
$\overline{\lambda}=\exp\!\bigl(-\alpha\Delta+i\Omega\bigr)$ with
\(
   \Omega \equiv \Delta\omega \pmod{2\pi}.
\)
Non‑aliasing condition states that distinct analogue frequencies $\omega_m\!\neq\!\omega_n$ map to distinct digital frequencies \(\Omega_m\neq\Omega_n\). In particular, resonant frequencies would not alias with any other if \(     |\Delta\,\omega|\;<\;\pi , \) the Nyquist bound.  Thus choosing the imaginary parts so that
$|\Delta\,\Im(\lambda_n)|<\pi$ for all $n$ guarantees
an alias‑free frequency coverage.
\end{restatable}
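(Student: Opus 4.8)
The statement is essentially the Nyquist--Shannon aliasing criterion transported into the pole language of diagonal SSMs, so the plan is to unwind the definitions and isolate the one genuinely delicate point (the strict inequality).

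First I would start from the ZOH rule in \eqref{eq discretized system}: restricted to a single eigenvalue, $\overline{\Lambda}=\exp(\Delta\Lambda)$ reads $\overline{\lambda}=\exp(\Delta\lambda)=\exp(-\alpha\Delta+i\Delta\omega)$. Writing this in polar form gives modulus $e^{-\alpha\Delta}$ (a positive real number, since $\alpha>0$ and $\Delta>0$) and argument $\Delta\omega$. Since $z\mapsto e^{iz}$ is $2\pi$-periodic, the phase depends on $\Delta\omega$ only through its residue modulo $2\pi$; defining $\Omega$ as the representative of $\Delta\omega$ in $(-\pi,\pi]$ (the principal value of the discrete angular frequency) yields $\overline{\lambda}=e^{-\alpha\Delta}e^{i\Omega}$ with $\Omega\equiv\Delta\omega\pmod{2\pi}$, which is exactly the first assertion.

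Next I would make ``aliasing'' precise: two analogue frequencies $\omega_m\neq\omega_n$ are indistinguishable after sampling precisely when they produce the same digital frequency, i.e. $\Delta\omega_m\equiv\Delta\omega_n\pmod{2\pi}$, equivalently $\Delta(\omega_m-\omega_n)\in 2\pi\mathbb{Z}\setminus\{0\}$. The key step is then a one-line injectivity argument: if every mode satisfies $|\Delta\omega_n|<\pi$, then for any pair $|\Delta(\omega_m-\omega_n)|\le|\Delta\omega_m|+|\Delta\omega_n|<2\pi$, so $\Delta(\omega_m-\omega_n)$ cannot be a nonzero integer multiple of $2\pi$; hence $\omega_m\neq\omega_n$ forces $\Omega_m\neq\Omega_n$. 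Rewriting $|\Delta\omega|<\pi$ as $\omega<\pi/\Delta$ (sampling rate $2\pi/\Delta$ exceeding twice the largest frequency) recovers the classical Nyquist bound, and since $\omega_n=\Im(\lambda_n)$ this is the stated condition $|\Delta\,\Im(\lambda_n)|<\pi$.

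The only point that needs care --- and I would flag it explicitly --- is why the inequality must be strict. The modulus factor $e^{-\alpha\Delta}>0$ never contributes to the argument, so aliasing is governed entirely by the imaginary parts; but at the boundary $|\Delta\omega|=\pi$ one has $e^{i\pi}=e^{-i\pi}=-1$, so a pole with $\Delta\omega=\pi$ and its conjugate with $\Delta\omega=-\pi$ both land on the Nyquist frequency $\theta=\pi$ and collapse to the same digital mode. Excluding this degenerate folding is precisely what forces $<\pi$ rather than $\le\pi$; everything else is the elementary periodicity of $z\mapsto e^{iz}$, so I do not anticipate any substantive obstacle beyond bookkeeping the open interval correctly.
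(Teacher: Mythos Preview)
Your argument is correct and is exactly the standard derivation one would expect: unwind the ZOH rule to get $\overline{\lambda}=e^{-\alpha\Delta}e^{i\Delta\omega}$, observe that the phase is only defined modulo $2\pi$, and then use the triangle inequality $|\Delta\omega_m-\Delta\omega_n|<2\pi$ to rule out nonzero folding. Your remark on why the inequality must be strict (the $\pm\pi$ boundary collapsing onto the Nyquist point) is a nice touch that the statement itself glosses over.

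There is nothing to compare against, however: the paper does not supply a proof of this lemma. The appendix ``Proofs'' section restates and proves only Proposition~\ref{prop:delay_spike_zoh}; Lemma~\ref{lem:nyquist_short} is asserted in the main text as a direct consequence of the discretization formula and the $2\pi$-periodicity of the complex exponential, and is never revisited. In that sense your write-up is strictly more than what the paper offers, and would serve perfectly well as the missing justification.
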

To address these  problems, we propose a new initialization scheme directly in the discrete domain.

\textbf{S4D-DFouT: Single SSM initialization.} We propsoe a novel initialization scheme, S4D-DFouT, which directly constructs the discrete-time state matrix with diagonal entries
\begin{equation}
\textbf{S4D-DFouT:} \; \overline{\lambda}_n = \exp \left( -\frac{\xi_n}{2} + i\frac{2\pi n}{N} \right), \quad n = 0, 1, \ldots, N-1,
\end{equation}
where $\xi>0$ is a learnable damping factor. This initialization places all poles uniformly around the unit circle in the complex plane, modulated by a shared exponential decay. The imaginary components $i\,\frac{2\,\pi\,n}{N}$ ensure complete and uniform coverage of the frequency spectrum, while the damping term $-\frac{\xi}{2}$ governs memory retention by controlling how quickly each mode decays over time. 

When $\xi=0$, the resulting state transition matrix becomes unitary and the associated SSM implements a perfect frequency basis, reducing exactly to the Discrete Fourier Transform (DFT). In this limiting case, the model can exactly represent any circular convolutional kernel of length $N$. Moreover, due to its complete and non-redundant spectral absis, the S4D-DFouT initialization enables the system to act as a universal approximator. 



\textbf{S4D-DFouT: Layer-wise initialization.} An S4 layer consists of multiple single-input, single-output SSMs operating in parallel, one per feature dimension in the embedding space. Suppose the embedding dimension is \(H\), then an S4 layer comprises \(H\) diagonal SSMs, each with an internal hidden state of size \(N\). To avoid redundant poles and to increase the coverage of angular frequencies of SSMs working together, we synchronize their DFouT initializations by assigning a fixed phase offset to every machine \(\{\phi_h\}_{h=1}^H 
 =\frac{2\,\pi\,(h-1)}{N\;H} \; \subset[0,2\pi/N)\).
Thus, the poles of the $h$-th SSM are 
\begin{equation}
    \overline{\lambda}_{h,n} = \exp\left(-\frac{\xi_n}{2}+i\,\big(\frac{2\,\pi\,n}{N}+\phi_h\big)\right) = \exp\left(-\frac{\xi_n}{2}+i\,\frac{2\,\pi\,\big(nH+(h-1)\big)}{N\,H}\right).
\end{equation}
Across the layers, the collection $\{\overline{\lambda}_{h,n}\}$ forms a uniform grid of size $N\,H$ on $[0,2\,\pi)$, ensuring every resonance band is represented exactly once and preventing multiple machines from competing at nearly identical frequencies. 

\textbf{S4D-DFouT: Real vs. Complex Inputs.} Ensuring real-valued outputs requires that any complex eigenvalues occur in conjugate pairs. Rather than enforcing this conjugate symmetry directly—an operation complicated in S4’s learnable parameterization—the original S4 formulation simply takes the real part of the resulting kernel. In our proposal, we adopt a similar strategy. Furthermore, when the input sequence $x[l]$ is real-valued, hence its $N$-point DFT $X[N]$ exhibits \emph{Hermitian symmetry} $X[N-n]= \overline{X[n]}$, for $ n = 1,2,\dots,\bigl\lfloor\tfrac{N}{2}\bigr\rfloor $,
all negative‑frequency bins are redundant and the spectrum is fully determined by the \emph{positive} frequencies
$\,n=0,1,\dots,\lceil N/2\rceil$. Exploiting this fact, one can initialize the diagonal SSM with only
$
      N^{+} \;=\; \bigl\lceil\tfrac{N}{2}\bigr\rceil + 1
$
poles distributed on the upper disc $\{\,e^{i\Omega}\mid\Omega\in[0,\pi]\}$ instead of the full $2\pi$ range. This \mbox{“half‑plane’’} S4D‑DFouT variant therefore reduces the state dimension and computational cost by a factor of~two, yet retains the complete information content of the sliding DFT for real‑valued signals (Fig.~\ref{fig:initializations} (c)).

\begin{figure}[t]
    \centering
    \begin{minipage}{\textwidth}
        \centering
        \begin{minipage}[]{0.3\textwidth}
            \centering
            \includegraphics[width=\textwidth]{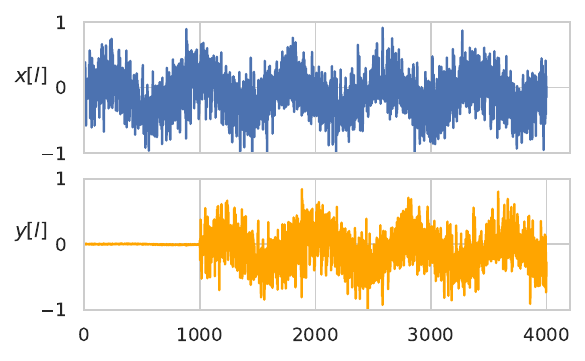}
        \end{minipage}
        \begin{minipage}[]{0.3\textwidth}
            \centering
            \includegraphics[width=\textwidth]{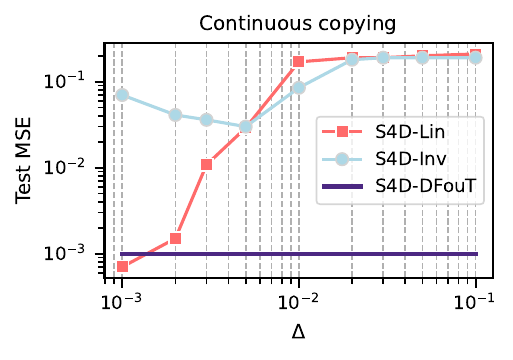}
        \end{minipage}
        \quad
        \begin{minipage}[]{0.3\textwidth}
          \centering
          \includegraphics[width=\textwidth]{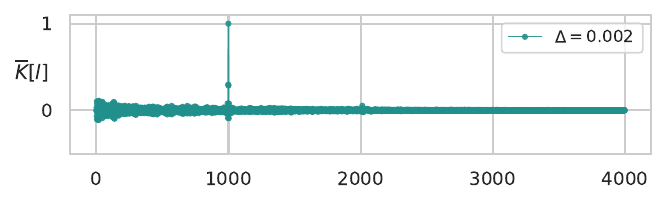}
          \vfill
          \includegraphics[width=\textwidth]{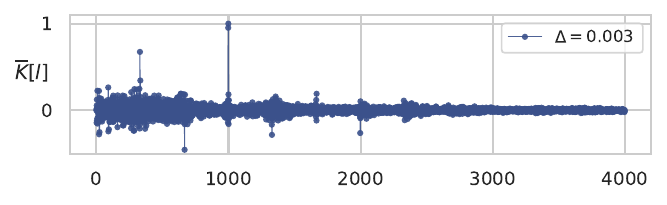}
        \end{minipage}
    \end{minipage}
    
    
    \caption{Continuous copying or \texttt{Delay} task. \textbf{Left:} An example input $x[l]$ and corresponding delayed output $y[l]$. \textbf{Center:} Reconstruction MSE for different initializations as a function of $\Delta$. \textbf{Right:} The S4D-Lin kernel learned at the theoretically optimal $\Delta=0.002$ (of similar performance of S4D-DFouT), and its distortion when $\Delta=0.003$.}
    \label{fig:delay}
\end{figure}

\section{Experiments}
\vspace{-10pt}
\label{sec:experiments}

Our experimental evaluation proceeds as follows. First, we introduce a motivating example in the Continuous Copying task. Then, we utilize \texttt{sCIFAR} to probe the inductive biases that SSMs exhibit when learning on serialized image data. Finally, we demonstrate the benefits of our S4D-DFouT initialization across the Long Range Arena benchmark~\cite{DBLP:conf/iclr/Tay0ASBPRYRM21}, and further ablation datasets as the Speech Commands dataset~\cite{DBLP:journals/corr/abs-1804-03209}. Details of the experimental settings are provided in the Appendix \ref{sec:experimental_details}.


\subsection{Continuous copying task} 

To motivate our frequency-based analysis, we first consider the Continuous copying or \texttt{Delay} task, a test of long-term memory where the correct pole placement is critical for success. Here, the model must learn to perform a sequence-to-sequence mapping $\mathbb{R}^{4000} \rightarrow \mathbb{R}^{4000}$ where the output is lagged by $\tau=1000$ steps (Fig~\ref{fig:delay} Left). The architecture comprises a single SSM of order \(N=1024\) followed by a linear output layer, and is trained on white‐noise inputs bandlimited to 1000 Hz. 

Successfully solving this task requires the SSM to approximate the ideal delay kernel, which is zero everywhere except at lag \(\tau\). In our experiment, we fix the $\Re(\lambda)=0$ to ensure no decay and center the experiment on the frequency selection. In this setup, we demonstrate that for previous initializations under an inappropriate selection of $\Delta$ fail to capture the desired dynamics (reconstruction MSE error is depicted in Fig~\ref{fig:delay} Center. In the particular case of S4D-Lin, the solution to this task is almost trivial when \(\Delta = \frac{2}{\tau}\), as stated in the following Proposition. 
\begin{restatable}{proposition}{propositionmain}
\label{prop:delay_spike_zoh}
Let the S4D‑Lin convolutional kernel under ZOH discretization be
$
\overline{K}[l] \;=\; 2\,\Re\!(\sum_{n=0}^{N-1} C_n\,\overline{\lambda}_n^\ell)$. If we choose 
\(\Delta = \tfrac{2}{\tau}\), where $\tau\in\mathbb{Z}_{>0}$, and take $C_i=1$ for all $i$, then \(\overline{K}\) presents a “spike” at \(l=\tau\), with \(|\overline{K}[l]|<|\overline{K}[\tau]|\ (\forall l\neq\tau)
\).  Moreover, if we take $C$ to be a trainable parameter in general, the Vandermonde matrix \((V_{l n})=(\overline{\lambda}_n^{l})\) is well‐conditioned and gradient descent for $C$ converges at a linear rate.
\end{restatable}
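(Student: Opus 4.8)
The plan is to plug the stated initialization directly into the kernel formula and read off a Dirichlet‑kernel structure. Setting $\Re(\lambda_n)=0$ (no decay, as stipulated for this task), $\Delta=\tfrac{2}{\tau}$, and $C_n=1$, the discretized poles become $\overline{\lambda}_n=e^{i\Delta\pi n}=e^{i\,2\pi n/\tau}=\omega^{n}$ with $\omega=e^{i\,2\pi/\tau}$ a primitive $\tau$-th root of unity (up to a harmless duplication of modes when $N>\tau$, which only rescales some terms). Hence $\overline{K}[l]=2\Re\!\bigl(\sum_{n=0}^{N-1}(\omega^{l})^{n}\bigr)$ is, up to the outer real part, a finite geometric series in $\omega^{l}$. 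Summing it shows $|\overline{K}[l]|=2N$ exactly when $\omega^{l}=1$, i.e.\ $\tau\mid l$, and $|\overline{K}[l]|<2N$ otherwise: the only equality case in $\bigl|\sum_n(\omega^l)^n\bigr|\le N$ is all summands coinciding, which forces $\omega^l=1$. Restricting to the relevant kernel window (length at most $2\tau$, so that the only multiples of $\tau$ are $0$ and $\tau$, with $l=0$ the trivial pass‑through term), $l=\tau$ is the unique location of maximal magnitude — the claimed spike — so the ideal delay kernel $\delta[l-\tau]$ is already essentially realized at $C\equiv\mathbf 1$.

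For the trainable-$C$ statement, I would first reduce the white‑noise reconstruction loss to kernel fitting: since the target is $\mathrm{shift}_\tau(x)$ and convolution is linear, the loss equals $\|e*x\|^2$ with $e=\overline{K}-\delta_\tau$, and taking expectation over i.i.d.\ inputs (flat spectrum / Parseval) this is proportional to $\|\overline{K}-\delta_\tau\|^2$ up to boundary terms. As $\overline{K}[l]=2\Re\!\bigl(\sum_n C_n\overline{\lambda}_n^{\,l}\bigr)$ is linear in $(\Re C,\Im C)$, this is a convex quadratic whose Hessian is a real‑ification of $V^{*}V$, where $V_{ln}=\overline{\lambda}_n^{\,l}$. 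The one computation needed is $\langle v_m,v_n\rangle=\sum_{l=0}^{L-1}e^{i(\Omega_n-\Omega_m)l}$ with $\Omega_n=2\pi n/\tau$ distinct (the non‑aliasing / Nyquist condition of Lemma~\ref{lem:nyquist_short}); when $\tau\mid L$ — which holds here, $L=4\tau$ — these off‑diagonal sums vanish and $V^{*}V=L\,I$, so $V$ is a rescaled DFT submatrix with condition number one (without $\tau\mid L$, a Gershgorin bound still gives a condition number independent of $L$). A convex quadratic with well‑conditioned Hessian then gives linear convergence of gradient descent by the standard strongly‑convex rate (indeed one step when $V^{*}V\propto I$), which closes the argument.

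The main difficulty is not a hard estimate but making the informal ``spike'' precise: the geometric sum yields equal‑magnitude peaks at every multiple of $\tau$, so the strict inequality ``$|\overline{K}[l]|<|\overline{K}[\tau]|$ for all $l\neq\tau$'' needs the kernel window to have length at most $2\tau$ and the $l=0$ term to be set aside. One must also carry along the two conventions folded into the stated kernel form — the outer $2\Re(\cdot)$ that substitutes for conjugate‑pair symmetry, and the absorption of the ZOH input factor $\overline{B}_n=(\overline{\lambda}_n-1)/\lambda_n$ into $C_n$ — and the mode duplication/aliasing when $N>\tau$, which should be removed (by passing to the $\tau$ distinct modes) before the DFT/Vandermonde conditioning argument applies cleanly.
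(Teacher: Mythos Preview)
Your approach is correct and mirrors the paper's: both compute the discretized poles as $\tau$-th roots of unity, evaluate the kernel as a geometric sum to locate the spike at $l=\tau$, and then use DFT orthogonality to show $V^{*}V\propto I$ and hence one-step/linear convergence of gradient descent. You are in fact more careful than the paper about qualifying the strict inequality (the paper's Step~4 glosses over $l=0$ and other multiples of $\tau$), and you take the Gram matrix over the full kernel length $L=4\tau$ rather than the paper's window $0\le l\le\tau$, but these are cosmetic differences in an otherwise identical argument.
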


Moreover, for S4D‑Lin any choice of \(\Delta > \tfrac{2}{\tau}\) makes the fundamental period to fall below \(\tau\), and no isolated spike at \(l=\tau\) can be formed in the kernel. Empirically, this misalignment leads to a large reconstruction error as shown in Fig~\ref{fig:delay} Right. In contrast, S4D-DFouT successfully manages to reconstruct the ideal delay kernel regardless of the initialization.

\subsection{Pixel-level image classification}

As introduced in Fig.~\ref{fig:local_attention}, the kernels learned using S4D initialization on \texttt{sCIFAR} present a —“local attention” profile— exhibiting clear peaks at offsets of $32$ and being almost zero beyond the first entries. When unrolling these kernels to the original $32\times32$ image grid, the peaks align with positions corresponding to the vicinities of the pixel being attended at a certain time step (i.e., local nearby pixels in the adjacent rows). Therefore, we conclude that the model’s receptive field learned is a compact band of locally nearby pixels in the image.

We compare this behavior to CNNs, where early convolutional layers specialize in detecting localized features, while deeper layers gradually integrate these local patterns into a more abstract global representation.


\begin{table}
    \centering
    \scriptsize
    \caption{Accuracy on the Long Range Arena (LRA) benchmark. 
“\xmark” indicates computationally infeasible runs, “\(\square\)” denotes unreported results, and "$\dagger$" experimentally reproduced in this work. The highest-performing diagonal variant is shown in \textbf{bold}.
Hyperparameter settings are detailed in Appendix~\ref{hyperparameters}. 
}
    \begin{tabular}{l c c c c c c c}
\toprule
\textbf{Method} &
\shortstack{\texttt{ListOps} \\ (2048)} & 
\shortstack{\texttt{Text} \\ (4096)} &
\shortstack{\texttt{Retrieval} \\ (4000)} &
\shortstack{\texttt{Image} \\ (1024)} &
\shortstack{\texttt{Pathfinder} \\ (1024)} &
\shortstack{\texttt{PathX-128} \\ (16,384)} &
\shortstack{\texttt{PathX-256} \\ (65,536)} \\
\midrule

Transformer    & 36.37 & 64.27 & 57.46 & 42.44 & 71.40 & \xmark & $\square$ \\
\midrule

S4-LegS  & 59.60 & 86.82 & 90.90 & 88.65 & 94.20 & 96.35  & $\square$ \\
S4-FouT  & 57.88 & 86.34 & 89.66 & 89.07 & 94.46 & \xmark    & $\square$ \\
S5              & 62.15 & 89.31 & 91.40 & 88.00 & 95.33 & 98.58  & $\square$ \\

\midrule
S4D-Lin        & 60.52 & 86.97 & \textbf{90.96} & 87.93 & 93.96  & \xmark \; (\textbf{96.02}$\dagger$)  & $\square$ \\
S4D-Inv        & 60.18 & 87.34 & 91.09 & 87.83 & 93.78  & 92.80  & $\square$ \\
S4D-LegS       & 60.47  & 86.18  & 89.46 & 88.19 & 93.06  & 91.95  & $\square$ \\
\midrule
S4D-DFouT  &   \textbf{61.89}    &    \textbf{87.41} & 90.95 &   \textbf{88.48}   &  \textbf{94.30}   & {94.17} & \textbf{87.89}\\

\bottomrule

\vspace{-20pt}
\end{tabular}







    \label{tab:table_1}
\end{table}


\begin{wrapfigure}{c}{10cm}
    \centering 
    \begin{subfigure}[t]{0.33\textwidth}
        \centering
        \includegraphics[width=0.9\textwidth]{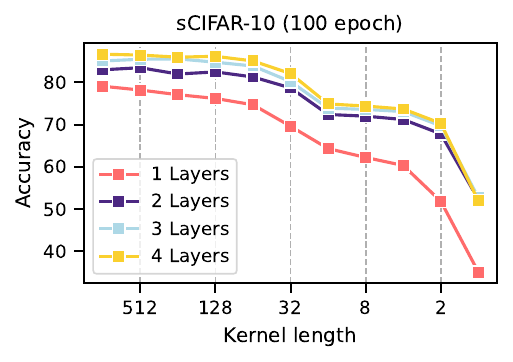}
    \end{subfigure}
    \begin{subfigure}[t]{0.33\textwidth}
        \centering
        \includegraphics[width=\textwidth]{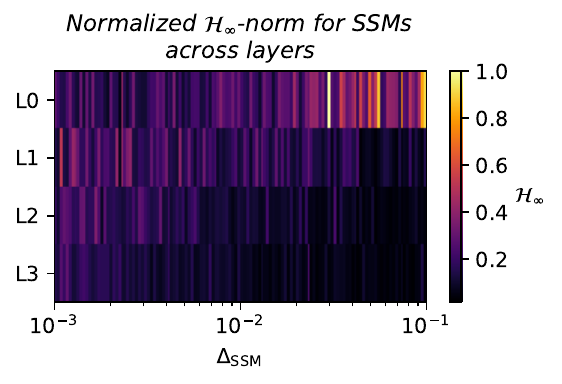}
    \end{subfigure}
    \caption{Pixel-level image classification on \texttt{sCIFAR}. \textbf{Left:} Ablation of the accuracy upon a reduced kernel length and number of trainable S4D-Lin layers. \textbf{Right:} Normalized $\mathcal{H}_\infty$-norm for each individual SSM initialized under S4D-Lin, sorted by its operating $\Delta$. }
    \label{fig:scifar}
    \vspace{-10pt}
\end{wrapfigure} 
To quantify the impact of this effect, we retrain the baseline model progressively shortening the length of the kernel (Fig.~\ref{fig:scifar} Left). As depicted, the model preserves the original accuracy when the kernel is reduced to only 32 coefficients, confirming that the model’s performance relies almost exclusively on very local context. Beyond that, the degradation is gradual, with nontrivial accuracy preserved down to surprisingly small kernel sizes.

Furthermore, we measure how much each SSM actually contributes to the output by computing its $\mathcal{H}_\infty$-norm~\cite{DBLP:conf/nips/GwakMKP24}. The model initialized with S4D-Lin comprises four layers of 128-dimensional embeddings, yielding a total of 512 SSMs, each initialized with a random discretization step sampled from the range $\Delta \in [10^{-3},\,10^{-1}]$. Strikingly, only those SSMs whose discretization step falls within a narrow subinterval in the first layer exhibit non-negligible $\mathcal{H}_\infty$-norms; the vast majority of them, particularly in deeper layers, remain effectively inactive (Fig.~\ref{fig:scifar} Right). We confirm this in the following experiment. We further ablate the layer's importance by retraining only with the first $k$-layer kept active and replacing all subsequent SSM layers with identity mappings (Fig.~\ref{fig:scifar} Left). In this setup, we observe that almost all of the learned sequential information resides in that very first SSM layer, capturing the critical local pixel‐row dependencies, while deeper SSM layers only provide incremental refinement.

\subsection{Long range arena}
\label{sec:experiment_lra}
The LRA benchmark has been widely adopted to assess a model’s capacity for capturing long‐range context across diverse sequence lengths and modalities. In this sense, the Fourier‐initialized variants have repeatedly underperformed on its hardest tasks, failing to solve challenges like \texttt{PathX-128} that other initializations could handle. We show that this shortfall is not an intrinsic limitation of Fourier initialization, but rather of the difficulty of selecting the $\Delta$ hyperparameter. The intuition gained on the “local‐attention” profile these kernels exhibit on serialized images provided us a prior on the selection of $\Delta$ to successfully learn this task (marked with "$\dagger$" in Table~\ref{tab:table_1}). These results show that Fourier‐based state‐space models can equally perform well, provided their discretization aligns with each task’s intrinsic timescale.

Furthermore, our proposed S4D-DFouT initialization liberates from these difficulties, and we could directly apply it successfully to scale to harder tasks such as \texttt{PathX-256} without any problem-specific tuning. So far, this is the first work to succeed in learning on \texttt{PathX-256} from scratch, as previous approaches relied on \textit{self-pretraining}~\cite{DBLP:conf/iclr/AmosB024} to handle sequences of this length.

\subsection{Raw speech classification}

We also evaluate the method on the Speech Commands, a 35-way spoken‐word classification task using raw audio waveforms. Each input is a 1-second signal sampled at 16 kHz. While previous studies typically extract spectral features such as MFCCs~\cite{DBLP:conf/iclr/RomeroKBTH22}, state-space models like S4 have demonstrated the ability to learn directly from time-domain inputs. As shown in Table~\ref{tab:table_3}, we quantify the performance gains under a zero-shot resampling scenario, where each test waveform is uniformly subsampled to 8 kHz without retraining. In the latter, we surpass the previous initializations by 2 points. 
\vspace{-5pt}
\subsection{Ablations} \vspace{-5pt}
In ~Fig. \ref{fig:ablation} we perfom an ablation study on the effect of upper bounds $\Delta_{max}$ and $\xi_{max}$ used in initialization and training for parameters $\Delta$ (in baselines) and $\xi$ (S4D-DFouT), respectively. In particular, the results show that the S4D-DfouT initialization scheme is less sensitive to the $\xi$ parameter compared to the baselines, especially on the dataset which present a strong frequential component (\texttt{Image} and \texttt{Pathfinder}).

\begin{figure*}[t]
    \begin{subfigure}[t]{0.33\textwidth}
        \centering
        \includegraphics[width=\textwidth]{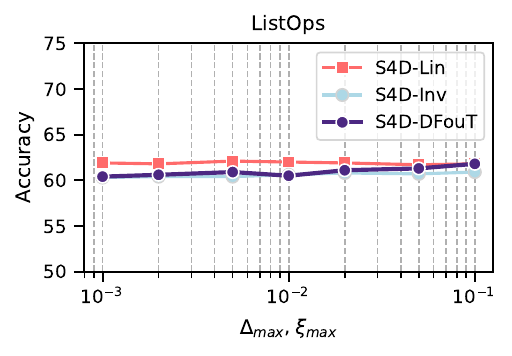}
    \end{subfigure}
    \begin{subfigure}[t]{0.33\textwidth}
        \centering
        \includegraphics[width=\textwidth]{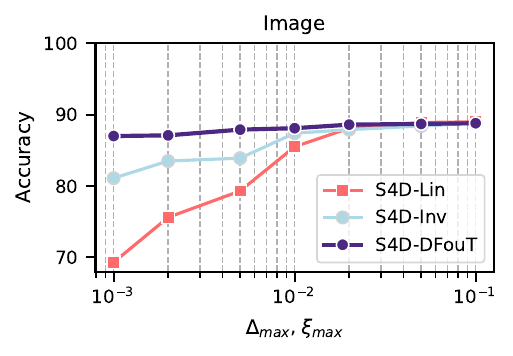}
    \end{subfigure}
    \begin{subfigure}[t]{0.33\textwidth}
        \centering
        \includegraphics[width=\textwidth]{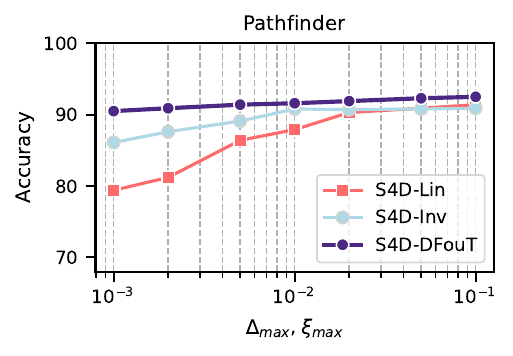}
    \end{subfigure}
    \caption{Ablation experiment in LRA. Accuracy upon different initialization schemes for fixed $\Delta_{min},\xi_{min}=10^{-3}$ and variable $\Delta_{max},\xi_{max}$. Datasets \texttt{Image} and \texttt{Pathfinder} presenting a strong frequential component are highly sensitive to an initialization that captures that frequency. }
    \label{fig:ablation}
    \vspace{-10pt}
\end{figure*}

\begin{wraptable}{r}{7.5cm}
\vskip-15pt
    \centering
    \scriptsize
    \caption{Accuracy on the ablation experiments, averaged over three random seeds (std. in parentheses).}   
\begin{tabular}{l c c c}
\toprule
 & \texttt{sCIFAR}
 & \multicolumn{2}{c}{\texttt{SC35}}
\\
 \cmidrule(lr){3-4} 
  \textbf{Method} 
  &
  & 16kHz
  & 8kHz\\
\midrule
S4-LegS  
  & 91.80 (0.43)
  & 96.08 (0.15) 
  & 91.32 (0.17)
 \\
S4-FouT   
  & 91.22 (0.25)
  & 95.27 (0.20) 
  & 91.59 (0.23)
\\
S5   
  & 90.10 (-)
  & 96.53 (-)
  & 94.53 (-)
\\
\midrule
S4D-LegS  
& 89.92 (1.69)
  & 95.83 (0.14)
  & 91.08 (0.16)
 \\
S4D-Inv   
& \textbf{90.69} (0.06)
  & 96.18 (0.27) 
  & 91.80 (0.24)
 \\
S4D-Lin   
& 90.42 (0.03)
  & \textbf{96.25} (0.03) 
  & 91.58 (0.33)
 \\
S4D-DFouT
 & 89.87 (0.07)
  & 96.07 (0.19)
  & \textbf{93.85} (0.27)
  \\
\midrule
S4D-Token
 & 84.29 (0.14)
  & 94.93 (0.17)
  & 92.92 (0.22)
  \\
S4D-RndImag
 & 85.68 (0.27)
  & 95.43 (0.26)
  & 91.06 (0.47)
  \\
S4D-Batched-DFouT
 & 88.67 (0.11)
  & 95.31 (0.09)
  & 0.07 (0.14)
  \\
\bottomrule
\vspace{-30pt}
\end{tabular}

    \label{tab:table_3}
\end{wraptable} 
Next, we ablate several discrete-domain initialization variants. First, we assign each pole a random phase by sampling its imaginary component as $\Omega_n \sim \mathcal{U}[0,2\pi)$, called S4D-RndImag. Next, we evaluate a token synchronization scheme, which aligns pole angles to periodic intervals every \textit{k}-tokens. Finally, we try a different S4D-DFouT synchronization using batches (details in Appendix~\ref{sec:ablation_details}). 
\vspace{-10pt}
\section{Conclusion and Limitations}
\label{sec:discussion}
\vspace{-5pt}
Although LRA tasks are considered hard in the literature, like PathX, and are designed to stress a model’s long-range reasoning, our analysis reveals that SSMs often \textbf{succeed} in it \textbf{by exploiting local biases} rather than learning truly global kernels. As a result, even the \textbf{hardest LRA benchmarks can be solved with receptive fields much shorter than the input length}—as long as the task’s contains principal frequency components that could be captured by the SSM's spectral modes. Prior initializations in the continuous domain hedge against unknown timescales by sampling $\Delta$ over a wide range. In this work, we show that our initialization in the discrete domain achieves uniform, alias-free spectral support, eliminating the need for $\Delta$ tuning. These \textbf{insights indicate that the intrinsic difficulty of LRA tasks may be overestimated}, underscoring \textbf{the need for new benchmarks} that cannot be circumvented through the presented learning biases and that truly challenge models to capture long-range dependencies.

We note that S4D-DFouT initialization did not help in solving a more challenging problem of permuted \texttt{psCIFAR}, achieving 65.7\% accuracy (see Appendix \ref{sec:extended_results}). Furthermore, we did not explore our proposed initialization in the context of textual data and LLMs, which we leave for a future study.

\bibliographystyle{unsrt} 
\bibliography{neurips_bib}

\clearpage

\clearpage


\appendix


\section{Theoretical insights}

\subsection{Transfer Function and Stability Conditions}
Consider a continuous time diagonal state-space system
$$
\frac{d}{dt}h(t)= \Lambda h(t)+Bx(t),\quad y(t) = C^\top h(t),\quad t>0,
$$
and its corresponding discretized model (for the sacke of simplicity, we do not enforce the real output as in \eqref{eq:1})
\begin{equation}\label{eq discrete system appendix}
\Sigma:\quad\quad   \begin{aligned}
h[l+1] &= \overline{\Lambda}\,x[l] + \overline{B}\,x[l]\,,\\
y[l]   &= C\,x[l]\,,
\end{aligned}
\end{equation}
where
\begin{align*}
\Lambda &= \mathrm{diag}(\lambda_1,\dots,\lambda_N), \quad \overline{\Lambda} = \mathrm{diag}(\overline{\lambda}_1,\dots,\overline{\lambda}_N),\\
B &= \begin{bmatrix}B_1 & \cdots & B_N\end{bmatrix}^\top\quad \overline{B} = \begin{bmatrix}\overline{B}_1 & \cdots & \overline{B}_N\end{bmatrix}^\top, \\
C &= \begin{bmatrix}C_1 & \cdots & C_N\end{bmatrix}.
\end{align*}

Focusing on the discretized model, the transfer function in the $Z$–domain, by definition, is given by

\[
H(z) \;= Y(z)/U(z) \;= \; C\,(zI - \overline{\Lambda})^{-1}\overline{B}.
\]
Because $\overline{\Lambda}$ is diagonal, this simplifies entry-wise to a sum of first-order terms
\[
H(z) \;=\; \sum_{n=1}^{N} \frac{C_n\,\overline{B}_n}{\,z - \overline{\lambda}_n\,}
     \;=\; \sum_{n=1}^{N} \frac{C_n\,\overline{B}_n\,z^{-1}}{\,1 - \overline{\lambda}_n\,z^{-1}\,}\,.
\]
The poles of $H(z)$ correspond to the eigenvalues of $\overline{\Lambda}$, $z = \{\overline{\lambda}_n\}_{n=1}^N$. The discrete‐time SSM is stable if and only if all poles lie inside the unit circle:
\[
|\overline{\lambda}_n| < 1,\quad n=1,2,\dots,N
\]

\textbf{Direct pole training.}
It is possible for diagonal SSMs models whose poles correspond to trainable parameters to control them to satisfy stability conditions directly. If the continuous diagonal system is stable, i.e. $\Re(\lambda_n) < 0$, then under the zero‐order hold discretization with step   $\Delta \in \mathbb{R}^+$, the magnitude of the poles become
\begin{equation}
    |\overline{\lambda}_n| \;=\; | e^{\Delta \lambda_n } | = | e^{\Delta \Re(\lambda_n) }| <1.
\end{equation}
Thus, the discretized SSM remains stable.

\subsection{\(\mathcal{H}_\infty\) Score}

In robust control~\cite{qin2023observer}, the \textit{\(\mathcal{H}_\infty\) score} of a discrete‐time LTI system, such as $\Sigma$ in \eqref{eq discrete system appendix}, with the transfer function matrix $H$, is defined as
\[
\mathcal{H}_\infty = \mathcal{H}_\infty(\Sigma):=\|H\|_\infty^2 \;:=\;\sup_{\theta\in[0,2\pi]} \sigma\bigl(H(e^{j\theta})\bigr)^2,
\]
where \(\sigma(\cdot)\) denotes the maximum singular value of a matrix.  This score captures the worst‐case amplification of disturbances across all frequencies. Crucially, it also provides an energy bound
\[
\|y\|_2^2 \;\le\;\|H\|_\infty\,\|x\|_2^2,
\]
so that the total output energy can never exceed the input energy scaled by \(\mathcal{H}_\infty\).

For a diagonal discrete SSM in \eqref{eq discrete system appendix}, each scalar-state subsystem $\Sigma_i$ has transfer function 
\[
H_n(z)\;=\;\frac{C_n\,\overline{B}_n}{z-\overline{\lambda}_n}
\]
hence a closed‐form $\mathcal{H}_\infty$ score
\[
\mathcal{H}_\infty(\Sigma_n) = \frac{\|C_n\|^2\;\|B_n\|^2}{(1 - |\lambda_n|)^2}
\]
highlighting how pole magnitudes limit worst‐case gain. This score directly measures each state’s worst‐case contribution to output energy. This score underpins modal‐truncation‐style pruning~\cite{DBLP:conf/nips/GwakMKP24}.


\section{Proofs}

\propositionmain*
\begin{proof}
We proceed in five detailed steps:

\medskip\noindent\textbf{1. Discrete poles via ZOH.}
For the sake of simplicity, let's suppose poles of the continuous system have no decay \(w_n=i\pi n\); then, ZOH gives
\[
d_n= \exp(\Omega) = \exp(w_n\,\Delta)
=\exp(i\pi n\Delta).
\]

\medskip\noindent\textbf{2. Fundamental discrete period.}
The first harmonic \(n=1\) has frequency
\(\Omega_1=\pi\Delta\).  A full \(2\pi\) rotation requires
\(\Omega_1\,T=2\pi\implies T=2/\Delta\).  Thus one cycle of \(n=1\)
takes precisely \(T=2/\Delta\) steps.

\medskip\noindent\textbf{3. Matching \(T\) to \(\tau\).}
Set \(T=\tau\), i.e.\ \(\Delta=2/\tau\).  Then for any \(n\),
\[
d_n^\tau
=\exp\bigl(i\,n\,\pi\,\Delta\,\tau\bigr)
=\exp(i\,2\pi\,n)
=1,
\]
so \emph{all} modes align in phase at step \(l=\tau\).

\medskip\noindent\textbf{4. Kernel evaluation.}
The SSM kernel is
\[
K[l]
=\Re\Bigl(2\sum_{n=0}^{N-1} C_n\,d_n^l\Bigr) = \Re\Bigl(2\sum_{n=0}^{N-1} d_n^l\Bigr).
\]
At \(l=\tau\), since \(d_n^\tau=1\) for all \(n\),
\[
K[\tau]
=2\,\Re\!\Bigl(\sum_{n=0}^{N-1} 1\Bigr) = 2N,
\]
whereas for \(l\neq\tau\), the \(d_n^l\) are not all unity and partially cancel, yielding
\(|K[l]|<K[\tau]\).

\medskip\noindent\textbf{5. Conditioning and GD.}
Fix an integer delay \(\tau>N\).  Then for \(0\le l\le \tau\) and \(0\le n\le N-1\), the Vandermonde matrix
\[
V_{l,n} = d_n^l
= \exp\!\Bigl(i\,\frac{2\pi n}{\tau}\,l\Bigr)
\]
is exactly a \((\tau+1)\times N\) partial Fourier matrix.  Concretely, its Gram matrix is
\[
V^*V
=\bigl[\!\sum_{l=0}^\tau d_n^l\,\overline{d_m^l}\bigr]_{n,m}
=\sum_{l=0}^\tau e^{i2\pi (n-m)l/\tau}
=\begin{cases}
    \tau+1, & n = m,\\
    0,      & n \neq m
  \end{cases},
\]
so \(V^*V=(\tau+1)\,I_{N}\), where the latter is the identity matrix. Hence, the eigenvalues of $V^*V$ are all equal 
\[
\lambda_{\min}(V^*V)
=\lambda_{\max}(V^*V)=\tau+1,
\]
and the condition number of \(V\) is exactly one.  

Now consider fitting the target vector \(y\in\mathbb R^{\tau+1}\) by minimizing 
\(\mathcal L=\tfrac12\|V C - y\|^2\) over \(C\in\mathbb C^{N}\). Since the Hessian is
\(
\nabla^2\mathcal L(C)=V^*V=(\tau+1)I
\),
we have that every eigenvalue of it is exactly \((\tau+1)\), 
so \(\mathcal L\) is \((\tau+1)\)\emph{-strongly convex}. Furthermore, the largest eigenvalue (smoothness constant) is  \(\tau+1\), thus the function is also \((\tau+1)\)\emph{-smooth}.  Consequently, the optimal gradient‐descent step‐size
\[
\eta = \frac{1}{\lambda_{max}(V^*V)} = \frac{1}{\tau+1},
\]
reaches the exact solution in one step. 

At each iteration \(t\), gradient descent update \(
C^{(t+1)} \;=\; C^{(t)} \;-\; \eta\,\nabla\mathcal{L}\bigl(C^{(t)}\bigr)
\), and for the least‐squares loss \(\mathcal{L}(C)\), the gradient is
    \[
      \nabla\mathcal{L}(C)
      = \frac{\partial}{\partial C}\,\frac12\,(VC - y)^*(VC - y)
      = V^*(VC - y).
    \]


Therefore, choosing \(\displaystyle \eta = \frac{1}{\tau+1}\) and using $V^*V=(\tau+1)I$, we have
\begin{equation*}
\begin{aligned}
      C^{(1)} &= C^{(0)} - \eta\,V^*\bigl(V C^{(0)} - y\bigr) \\
      &= C^{(0)} - \frac{1}{\tau+1}\,V^*\bigl(V C^{(0)} - y\bigr) \\
      &= y\; \frac{V^*}{\tau+1} =  \frac{y}{V} = C^*
\end{aligned}
\end{equation*}
    
More generally any \(0<\eta<\tfrac{2}{\tau+1}\) still guarantees linear convergence at rate
\(\|C^{(t)}-C^*\|\le(1-\eta(\tau+1))^t\|C^{(0)}-C^*\|\).





\medskip This completes the proof. \(\square\)
\end{proof}

\section{Experimental details}
\label{sec:experimental_details}

In this section, we provide descriptions of the datasets, task formulations, model architectures, and training protocols used throughout our evaluations. 

\subsection{Continuous Copying Task}

The Continuous Copying (or \texttt{Delay}) task is posed as a sequence-to-sequence problem in which the model must reproduce the input delayed by $\tau = 1000$ time steps.  Specifically, we utilize white-noise sequences of length 4000 samples, band-limited to 1000 Hz, and train a single linear state-space model with state dimension $N = 1024$ end-to-end for 20 epochs.

\subsection{ Pixel-level image classification}
The serialized CIFAR-10 (\texttt{sCIFAR}) dataset is constructed by flattening the $32\times32$ color images into a one-dimensional sequence of length 1024. All sequences are then standardized to zero mean and unit variance. In the permuted serialized (\texttt{psCIFAR}) variant, the same preprocessing is applied, but with a single fixed random permutation of the 1024 pixel positions before flattening, thereby removing any local spatial structure. The permutation vector utilized is included in the supplementary code. Ablation results on \texttt{psCIFAR} are available in Table~\ref{tab:table_4}.

\subsection{Long Range Arena}
The Long Range Arena (LRA)~\cite{DBLP:conf/iclr/Tay0ASBPRYRM21}  benchmark comprises seven diverse sequence‐classification tasks designed to test a model’s ability to capture long‐range dependencies: 

\textbf{ListOps.} The dataset consists of mathematical expressions formed by nested operators—such as \(\min\) and \(\max\)—applied to integer numbers in the range \(0\)–\(9\), and written in prefix notation with explicit brackets. Each character is represented as a one-hot vector over 17 possible tokens (with all opening brackets and operators grouped into a single token).  Because sequence lengths vary, shorter sequences are padded with a special padding symbol up to a maximum length of 2048.  The task is to classify each expression into one of 10 classes, corresponding to its integer result.

\textbf{Text.} The IMDb sentiment classification task is as follows, given a movie review represented as a sequence of character tokens, the goal is to predict whether the review is positive or negative.  Each character is one-hot encoded over 129 possible tokens.  Because reviews vary in length, sequences are padded to a maximum length of 4096.

\textbf{Retrieval.} The task is based on the ACL Anthology network corpus; given two textual citations encoded as sequences of character tokens, predict whether they refer to the same publication. Each character is one‐hot encoded over 97 possible tokens. Sequences vary in length and are padded to a maximum of 4000 tokens using a special padding symbol, and a reserved end‐of‐sequence token is appended. The model outputs one of two classes: equivalent or non‐equivalent.

\textbf{Image.} Considers the CIFAR-10 image classification task, but images are first converted to a single‐channel grayscale intensity map, and pixel values are normalized to zero mean and unit variance across the entire dataset.  The resulting $32\times 32$ image is flattened into a sequence of length 1024.  All sequences are of equal length, and the model predicts one of ten classes corresponding to the 10 categories.

\textbf{Pathfinder.} In this task, the images contain two marked points (start and end) represented by small circles, and a set of dashed line segments. The objective is to predict whether there exists a continuous dashed‐line path connecting the start and end points. Each input is a $32\times32$ grayscale image with pixel values normalized to the range $[-1,1]$. 

\textbf{PathX}. An “extreme” version of the Pathfinder challenge. Instead, the images are 128 × 128 pixels in the \texttt{PathX-128}, or 256 x 256 in the \texttt{PathX-256}, resulting in much longer sequences. As denoted in Table\ref{tab:table_1} by ($\dagger$), we manage to successfully train S4D-Lin on PathX by increasing the range from ($\Delta_{min}$, $\Delta_{max}$) = (0.0001, 0.01) to ($\Delta_{min}$, $\Delta_{max}$) = (0.0001, 0.1). This is done in order to accommodate the fundamental harmonic within the model frequencies explored at initialization. This requires  $\Delta \geq 2/128 = 0.015$, which was not covered in the initial range proposed in~\cite{DBLP:conf/nips/GuG0R22}.

\subsection{Speech Commands}
We use the full 35-class Speech Commands dataset~\cite{DBLP:journals/corr/abs-1804-03209}, which comprises 1 second of recorded voice commands sampled at 16 kHz. Additionally, as introduced in~\cite{DBLP:conf/nips/GuG0R22}, we also test the zero-shot performance when the signal is undersampled at 8kHz.

\subsection{BIDMC Vital signs}

The BIDMC dataset~\cite{DBLP:data/10/TanBPW20v} consists of continuous physiological signals from raw EKG and PPG traces: heart rate (HR), respiratory rate (RR), and blood oxygen levels (SpO$_2$) over length-4000 signals which are normalized per channel to zero mean/unit variance. 

\begin{table*}[t]
    \centering
    \small
    \caption{Results on additional datasets. Accuracy for \texttt{psCIFAR} and RMSE for predicting respiratory rate (RR), heart rate (HR), and blood oxygen (SpO$_2$).}   
    
\begin{tabular}{l c c  c c c}
\toprule
 & \texttt{psCIFAR}
 &
 & \texttt{BIDMC-HR}
 & \texttt{BIDMC-RR}
 & \texttt{BIDMC-SpO$_2$}
\\
 \cmidrule(lr){2-2}  \cmidrule(lr){4-6} 
  \textbf{Method} 
  & Acc. ($\uparrow$)
  &
  & RMSE ($\downarrow$)
  & RMSE ($\downarrow$)
  & RMSE ($\downarrow$)\\
\midrule
S4-LegS  
  & 64.96
  &
  & 0.332 
  & 0.247 
  & 0.090 
 \\
S4-FouT   
 & 65.27
 &
  & 0.339 
  & 0.301 
  & 0.068 
\\
\midrule
S4D-LegS  
& -
&
  & \textbf{0.367} 
  & 0.248 
  & 0.102 
\\
S4D-Lin   
& 64.51
&
& 0.379 
  & 0.226 
  & 0.114 
 \\
S4D-Inv  
& 64.88
&
& {0.373}
  &  0.254 
  & 0.110 
 \\
\midrule
S4D-DFouT
  & \textbf{65.72}
  &
  & 0.415
  & 0.273
  & 0.122
  \\
S4D-Token
& 60.25
&
 & 0.490
  & 0.440
  & 0.126
  \\
S4D-RndImag
& 60.55
&
 & {0.437} 
  & \textbf{0.216}
  & \textbf{0.087}
  \\
S4D-Batched-DFouT
& 63.61
&
 & 0.456
  & 0.344
  & 0.118
  \\
\bottomrule
\vspace{-30pt}
\end{tabular}

    \vspace{10pt}
    \label{tab:table_4}
\end{table*}

\subsection{Ablations details}
\label{sec:ablation_details}

We compare three variants of discrete‐domain initialization beyond the main S4D‐DFouT scheme. The ablation methods share the real part with DFouT, i.e. exp(\(-\xi/2\)), while the imaginary part is configured as follows:
\begin{itemize}
  \item \textbf{S4D‐Token:} Discrete frequencies are chosen to resonate at integer periods \(n=1,2,\dots,N\), i.e.
\(
\Omega_n = 2\pi/n.
\)
This assigns one pole per token in the sequence, yielding a coarse‐grained coverage of the frequency spectrum biased towards low frequencies.
  \item \textbf{S4D‐RndImag:} Each imaginary component \(\Omega_n\) is i.i.d. sampled from the uniform distribution $\mathcal{U}[0,2\pi)$. 
  \item \textbf{S4D‐Batched‐DFouT:} Synchronization across the \(H\) SSMs in each layer is achieved by partitioning the spectrum into \(H\) contiguous blocks, each of size \(N\). Therefore, the \(h\)-th SSM is assigned with a batch of $N$ adjacent frequencies shifted by a phase \(\phi_h\in[0,2\pi)\):
\end{itemize}
\begin{equation}
    \textbf{Batched-DFouT:} \;  \overline{\lambda}_{h,n} = \exp\left(-\frac{\xi_h}{2}+i\,\big(\frac{2\,\pi\,n}{N H}+\phi_h\big)\right) \; \text{where} \; \phi_h = \frac{2\,\pi\,(h-1)}{H}.
\end{equation}

\subsection{Extended Results}
\label{sec:extended_results}

In Figure~\ref{fig:kernels} we show the kernel visualization together with the discrete Fourier transform on three representative LRA benchmarks. These visualizations demonstrate that datasets with strong local structure—such as the serialized image benchmarks \texttt{sCIFAR} and \texttt{PathX}—exhibit clear frequency components corresponding to row-wise correlations. SSM models exploit this fact to learn kernels showing a “local attention” profile. The learned kernels “attends” primarily to its nearby inputs, imposing an inductive bias toward locality. By contrast, the \texttt{ListOps} benchmark exhibits no such localized frequency peaks, reflecting its lack of inherent local structure.

\begin{figure*}[t]
    \begin{subfigure}[t]{0.33\textwidth}
        \centering
        \includegraphics[width=\textwidth]{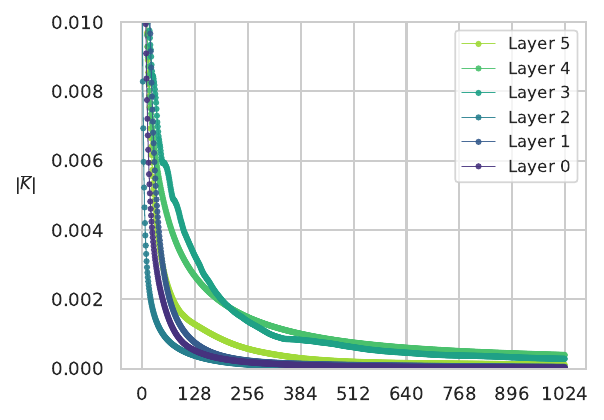}
        \caption{LRA/Listops}
    \end{subfigure}
    \begin{subfigure}[t]{0.33\textwidth}
        \centering
        \includegraphics[width=\textwidth]{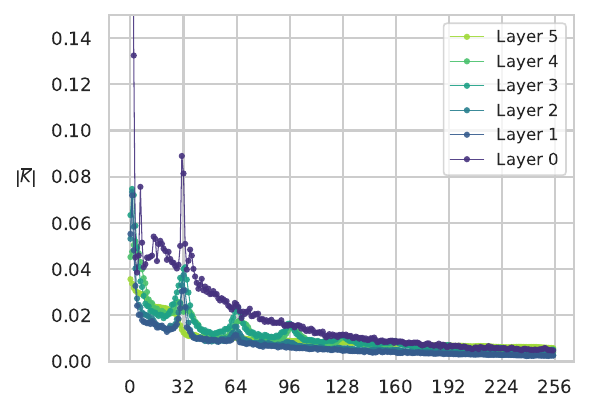}
        \caption{LRA/Image}
    \end{subfigure}
    \begin{subfigure}[t]{0.33\textwidth}
        \centering
        \includegraphics[width=\textwidth]{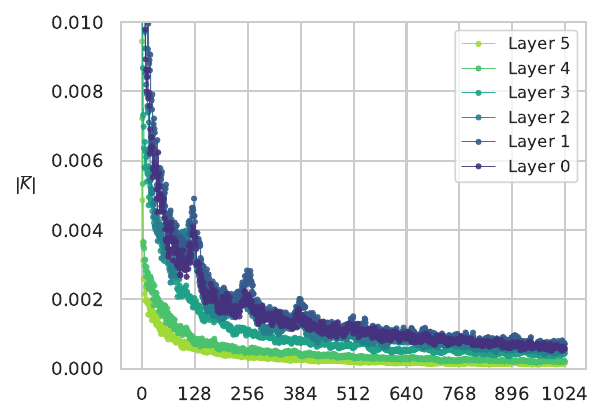}
        \caption{LRA/PathX-128}
    \end{subfigure}
    \begin{subfigure}[t]{0.33\textwidth}
        \centering
        \includegraphics[width=\textwidth]{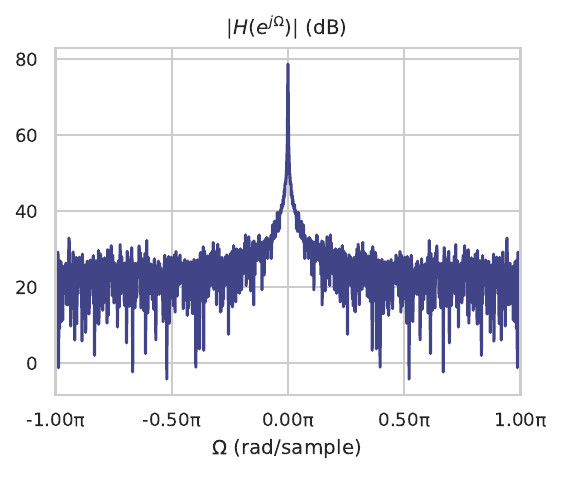}
        \caption{LRA/Listops}
    \end{subfigure}
    \begin{subfigure}[t]{0.33\textwidth}
        \centering
        \includegraphics[width=\textwidth]{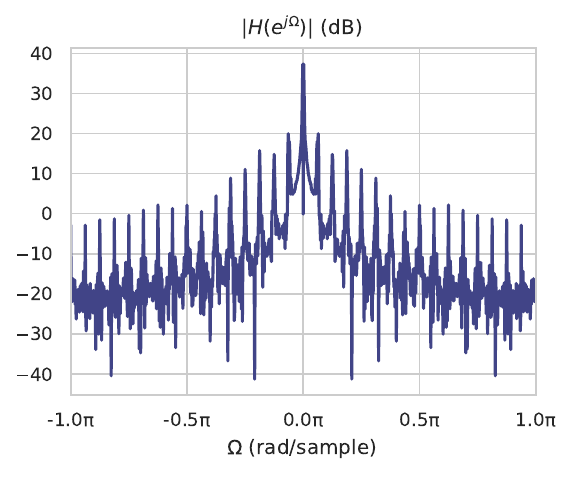}
        \caption{LRA/Image}
    \end{subfigure}
    \begin{subfigure}[t]{0.33\textwidth}
        \centering
        \includegraphics[width=\textwidth]{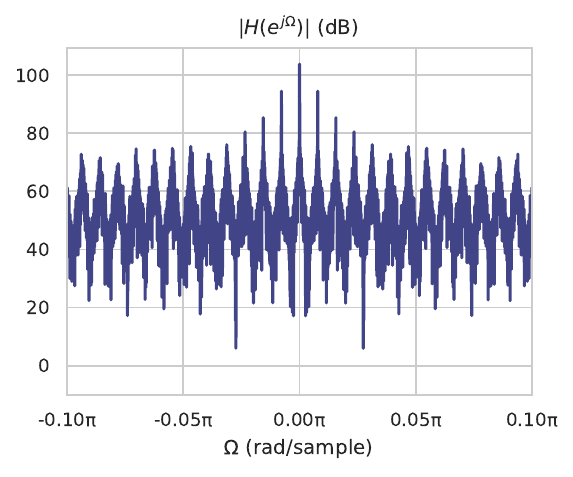}
        \caption{LRA/PathX-128 (Zoomed)}
    \end{subfigure}  
\caption{Subplots (a)–(c) show the mean absolute value of the learned discrete kernel \(|\overline{K}[l]|\) for each of the six SSM layers on three LRA benchmarks—(a) \texttt{ListOps}, (b) \texttt{Image}, and (c) \texttt{PathX-128}—where only the first \(L'\) coefficients are displayed on the x-axis to facilitate the visualization. Subplots (d)–(e) present the magnitude of the discrete Fourier transform (in dB) for 10\,000 randomly sampled sequences from the same datasets. These visualizations demonstrate that datasets with strong local structure—such as the serialized image benchmarks \texttt{sCIFAR} and \texttt{PathX-128}—exhibit clear frequency components corresponding to row-wise correlations. This is reflected in the learned kernels to exploit a “local attention” profile. In \texttt{sCIFAR}, a pronounced peak in the kernel occurs at multiples of 32-pixel (the row-stride), which corresponds to a clear frequency component ($\Omega = 2\pi/32$); whereas \texttt{PathX-128} shows periodic modes at 128 pixels, i.e., frequencies of $\Omega = 2\pi/128$ along with their higher-frequency harmonics. By contrast, the \texttt{ListOps} benchmark exhibits no such localized frequency peaks, reflecting its lack of inherent local structure.}

    \label{fig:kernels}
  
\end{figure*}

\begin{figure*}[h]
    \begin{subfigure}[t]{0.33\textwidth}
        \centering
        \includegraphics[width=\textwidth]{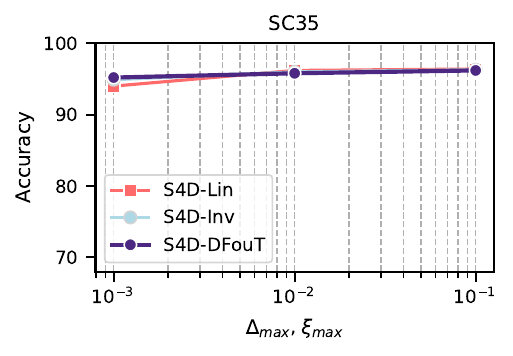}
    \end{subfigure}
    \begin{subfigure}[t]{0.33\textwidth}
        \centering
        \includegraphics[width=\textwidth]{figures/ListOps_dtmax.pdf}
    \end{subfigure}
    \begin{subfigure}[t]{0.33\textwidth}
        \centering
        \includegraphics[width=\textwidth]{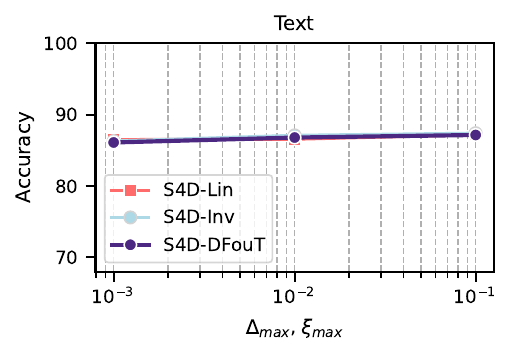}
    \end{subfigure}
    \begin{subfigure}[t]{0.33\textwidth}
        \centering
        \includegraphics[width=\textwidth]{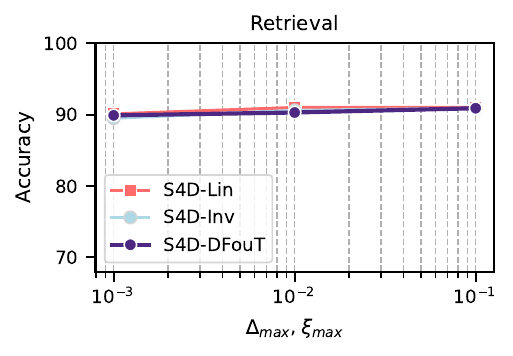}
    \end{subfigure}
    \begin{subfigure}[t]{0.33\textwidth}
        \centering
        \includegraphics[width=\textwidth]{figures/Image_dtmax.pdf}
    \end{subfigure}
    \begin{subfigure}[t]{0.33\textwidth}
        \centering
        \includegraphics[width=\textwidth]{figures/Pathfinder_dtmax.pdf}
    \end{subfigure}
    \caption{Ablation experiment in LRA. Accuracy upon different initialization schemes for fixed $\Delta_{min},\xi_{min}=10^{-3}$ and variable $\Delta_{max},\xi_{max}$. Datasets \texttt{Image} and \texttt{Pathfinder} presenting a strong frequential component are highly sensitive to an initialization that captures that frequency. }
    \label{fig:full-max}
    \vspace{-10pt}
\end{figure*}

Figure~\ref{fig:full-max} extends Figure~\ref{fig:ablation} from the main manuscript to the remaining datasets LRA benchmarks. Here, we perfom an ablation study on the effect of upper bounds $\Delta_{max}$ and $\xi_{max}$ used in initialization and training for parameters $\Delta$ (in baselines) and $\xi$ (S4D-DFouT), respectively. The results show that the S4D-DfouT initialization scheme is less sensitive to the $\xi$ parameter compared to the baselines, especially on the dataset which present a strong frequential component (\texttt{Image} and \texttt{Pathfinder}).

Finally, an extended comparison on Long Range Arena is presented in Table~\ref{tab:table_5}. It spans across all seven tasks, including the most challenging \texttt{PathX-256} setting. While prior work denoted as “S4 + Self-Pretrain”~\cite{DBLP:conf/iclr/AmosB024} (marked $\dagger$) also succeed on it, it does relying on self-pretraining. Whereas our S4D-DFouT achieves it training from scratch.

\newpage

\begin{table*}[h]
    \centering
    \scriptsize
    \caption{Extended comparison on LRA. ($\dagger$) denotes the method requires self-pretraining.}
    
\begin{tabular}{@{}lrrrrrrr@{}}
\toprule
\textbf{Model}  & \texttt{ListOps} & \texttt{Text} & \texttt{Retrieval} & \texttt{Image} & \texttt{Pathfinder} & \texttt{PathX-128} & \texttt{PathX-256}\\
                     & (2,048) & (4,096) & (4,000) & (1,024) & (1,024) & (16,384) & (65,536)\\
\midrule
Transformer~\cite{DBLP:conf/nips/VaswaniSPUJGKP17}             & 36.37 & 64.27 & 57.46 & 42.44 & 71.40 & \xmark & $\square$ \\
Reformer~\cite{DBLP:conf/iclr/KitaevKL20}                 & 37.27 & 56.10 & 53.40 & 38.07 & 68.50 & \xmark & $\square$\\
BigBird~\cite{DBLP:conf/nips/ZaheerGDAAOPRWY20}                 & 36.05 & 64.02 & 59.29 & 40.83 & 74.87 & \xmark & $\square$\\
Linear Trans.~\cite{DBLP:conf/icml/KatharopoulosV020}  & 16.13 & 65.90 & 53.09 & 42.34 & 75.30 & \xmark & $\square$\\
Performer~\cite{DBLP:conf/iclr/ChoromanskiLDSG21}           & 18.01 & 65.40 & 53.82 & 42.77 & 77.05 & \xmark & $\square$ \\
\midrule
MEGA~\cite{DBLP:conf/iclr/MaZKHGNMZ23}              & {63.14} & {90.43} & {91.25} & {90.44} & {96.01} & {97.98} & $\square$\\
\midrule
DSS~\cite{DBLP:conf/nips/0001GB22}      & 60.60 & 84.80 & 87.80 & 85.70 & 84.60 & 87.80 & $\square$\\
S4D-LegS~\cite{DBLP:conf/nips/GuG0R22}                   & 60.47 (0.34) & 86.18 (0.43) & 89.46 (0.14) & 88.19 (0.26) & 93.06 (1.24) & 91.95 & $\square$\\
S4D-Inv~\cite{DBLP:conf/nips/GuG0R22}                      & 60.18 (0.35) & 87.34 (0.20) & 91.09 (0.01) & 87.83 (0.37) & 93.78 (0.25) & 92.80 & $\square$\\
S4D-Lin~\cite{DBLP:conf/nips/GuG0R22}                    & 60.52 (0.51) & 86.97 (0.23) & 90.96 (0.09) & 87.93 (0.34) & 93.96 (0.60) & \xmark & $\square$\\
S4-FouT~\cite{DBLP:conf/iclr/GuGR22}                      & 57.88 (1.90) & 86.34 (0.31) & 89.66 (0.88) & 89.07 (0.19) & 94.46 (0.24) & \xmark & $\square$\\
S4-LegS~\cite{DBLP:conf/iclr/GuGR22}                     & 59.60 (0.07) & 86.82 (0.13) & 90.90 (0.15) & 88.65 (0.23) & 94.20 (0.25) & 96.35 & $\square$\\
Liquid-S4~\cite{DBLP:conf/iclr/HasaniLWCAR23}                & 62.75 (0.20) & 89.02 (0.04) & 91.20 (0.01) & 89.50 (0.40) & 94.80 (0.20) & 96.66 (0.001) & $\square$\\
\midrule
S5-Inv~\cite{DBLP:conf/iclr/SmithWL23}                     & 60.07 (0.26) & 87.77 (0.29) & 91.26 (0.12) & 86.41 (0.17) & 93.42 (0.42) & 97.54 (0.74) & $\square$\\
S5-Lin~\cite{DBLP:conf/iclr/SmithWL23}                   & 59.98 (0.53) & 88.15 (0.24) & 91.31 (0.24) & 86.05 (0.96) & 94.31 (0.36) & 65.60 (27.00) & $\square$\\
S5~\cite{DBLP:conf/iclr/SmithWL23}                                     & 62.15 (0.23) & 89.31 (0.15) & 91.40 (0.05) & 88.00 (0.22) & 95.33 (0.26) & 98.58 (0.17) & $\square$\\
\midrule
LRU~\cite{DBLP:conf/icml/OrvietoSGFGPD23} & 60.2(0.8) &89.4(0.1) &89.9(0.1) &89.0(0.1) & 95.1(0.1)& 94.2(0.4) & $\square$ \\
\midrule
S4 + Self-Pretrain~\cite{DBLP:conf/iclr/AmosB024}($\dagger$) &  61.25 & 90.34 & 88.74 & 89.36 & 94.92  & 96.94 & 87.11\\
\midrule
S4D-DFouT  &   {61.89}    &    {87.41} & 90.95 &   {88.48}   &  {94.30}   & {94.17} & {87.89}\\
\bottomrule
\end{tabular}

    \label{tab:table_5}
\end{table*}

\begin{figure*}[t]
  \centering
  \vspace{5pt}
  S4D-Lin
  
  \begin{subfigure}[t]{0.2\textwidth}
    \centering
    \includegraphics[width=\textwidth, trim={0 0 2cm 0},clip]{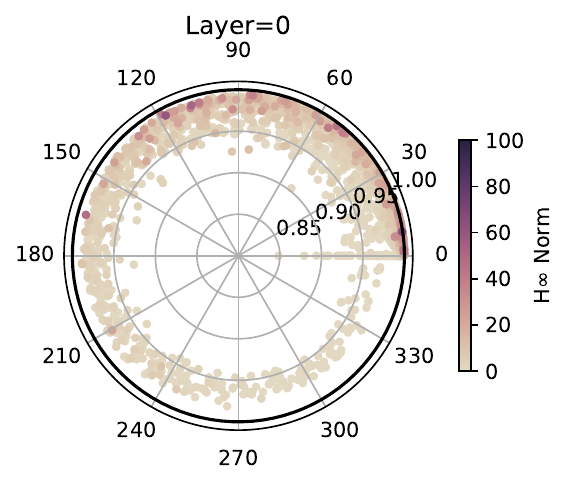}
  \end{subfigure}\hfill
  \begin{subfigure}[t]{0.2\textwidth}
    \centering
    \includegraphics[width=\textwidth, trim={0 0 2cm 0},clip]{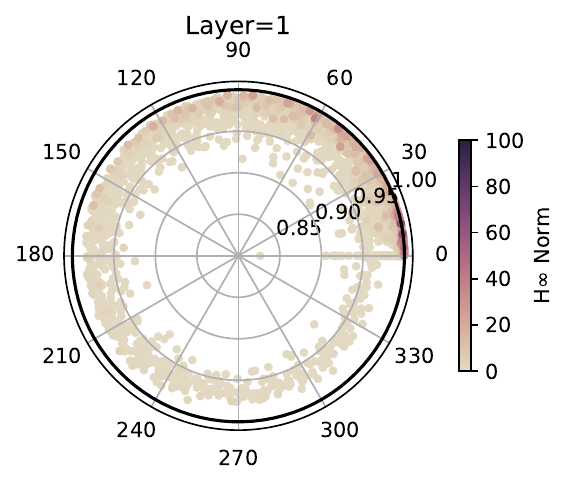}
  \end{subfigure}\hfill
  \begin{subfigure}[t]{0.2\textwidth}
    \centering
    \includegraphics[width=\textwidth, trim={0 0 2cm 0},clip]{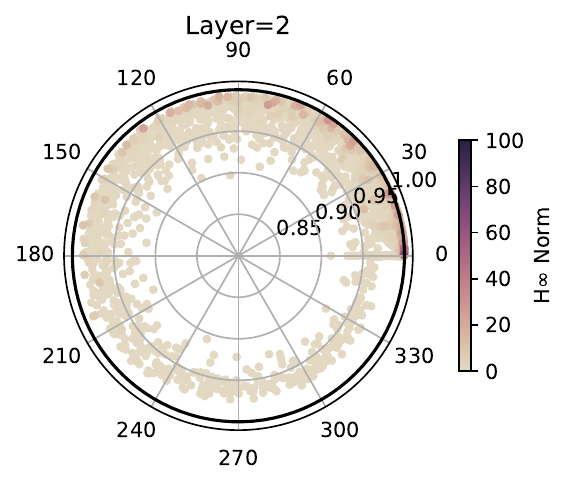}
  \end{subfigure}\hfill
  \begin{subfigure}[t]{0.25\textwidth}
    \centering
    \includegraphics[width=\textwidth, trim={0 0 0cm 0},clip]{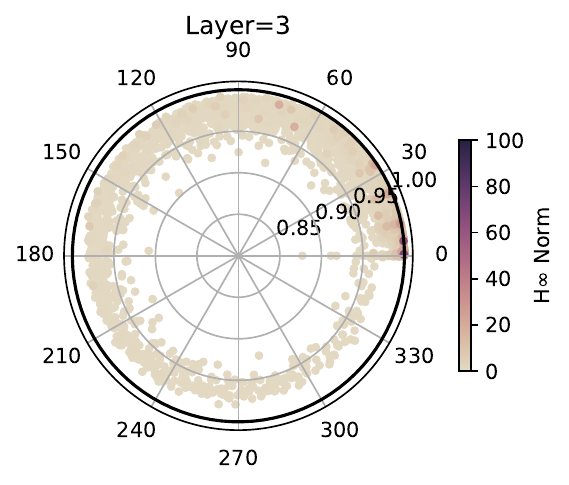}
  \end{subfigure}

  S4D-Inv
  
  \begin{subfigure}[t]{0.2\textwidth}
    \centering
    \includegraphics[width=\textwidth, trim={0 0 2cm 0},clip]{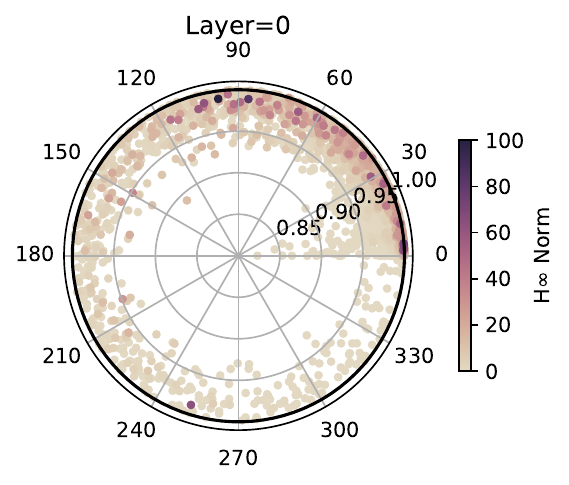}
  \end{subfigure}\hfill
  \begin{subfigure}[t]{0.2\textwidth}
    \centering
    \includegraphics[width=\textwidth, trim={0 0 2cm 0},clip]{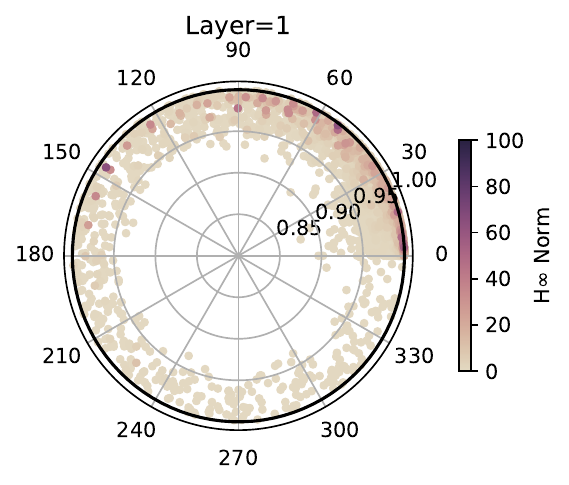}
  \end{subfigure}\hfill
  \begin{subfigure}[t]{0.2\textwidth}
    \centering
    \includegraphics[width=\textwidth, trim={0 0 2cm 0},clip]{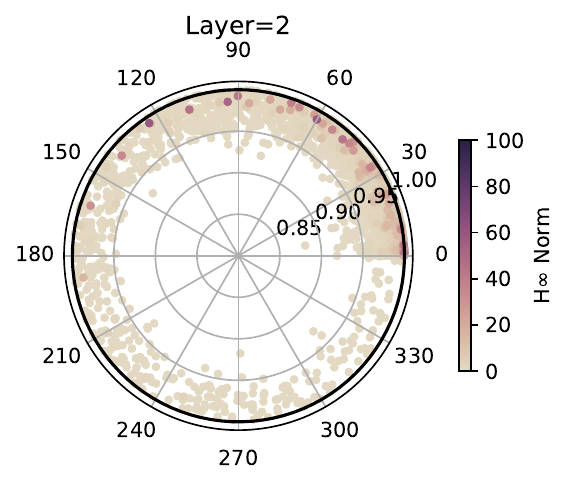}
  \end{subfigure}\hfill
  \begin{subfigure}[t]{0.25\textwidth}
    \centering
    \includegraphics[width=\textwidth, trim={0 0 0cm 0},clip]{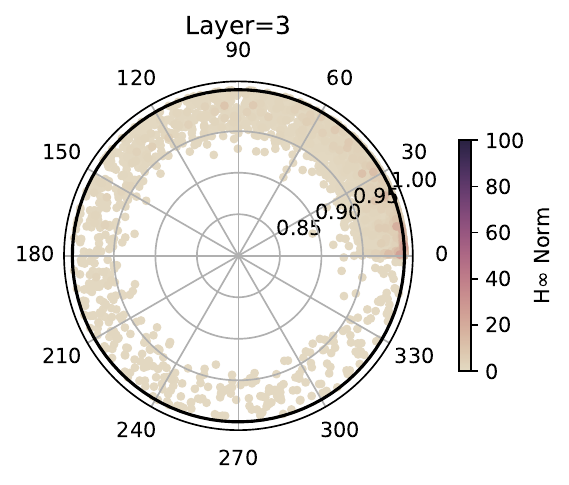}
  \end{subfigure}

  S4D-DFouT
  
  \begin{subfigure}[t]{0.2\textwidth}
    \centering
    \includegraphics[width=\textwidth, trim={0 0 2cm 0},clip]{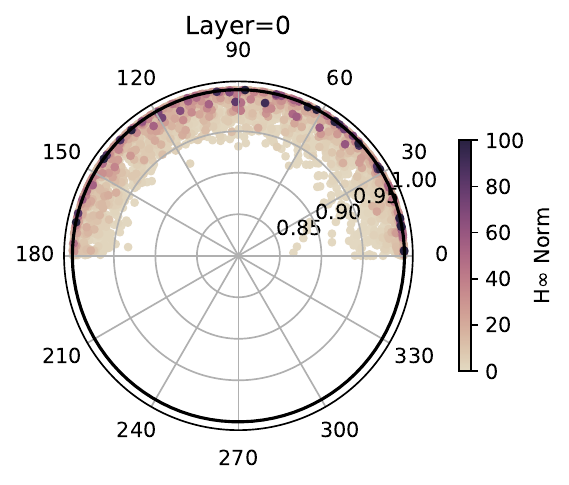}
  \end{subfigure}\hfill
  \begin{subfigure}[t]{0.2\textwidth}
    \centering
    \includegraphics[width=\textwidth, trim={0 0 2cm 0},clip]{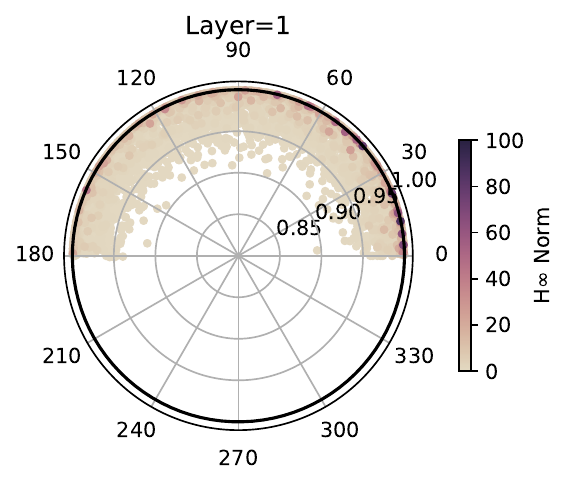}
  \end{subfigure}\hfill
  \begin{subfigure}[t]{0.2\textwidth}
    \centering
    \includegraphics[width=\textwidth, trim={0 0 2cm 0},clip]{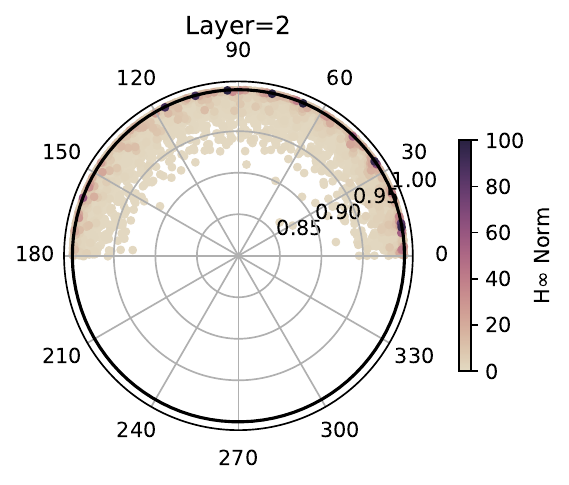}
  \end{subfigure}\hfill
  \begin{subfigure}[t]{0.25\textwidth}
    \centering
    \includegraphics[width=\textwidth, trim={0 0 0cm 0},clip]{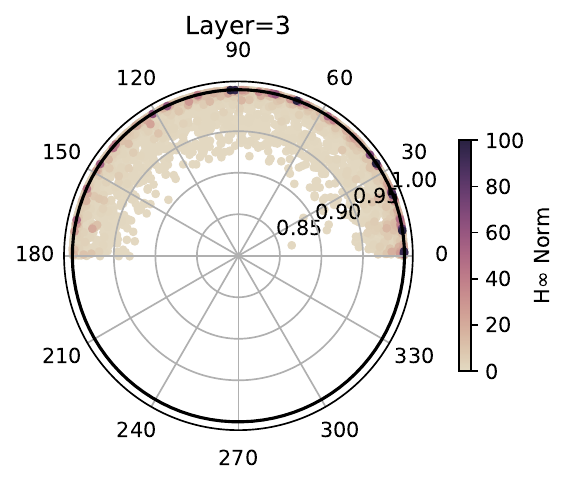}
  \end{subfigure}

  \caption{Polar plots of the learned complex poles on \texttt{sCIFAR} for three initialization schemes—S4D-Lin (top row), S4D-Inv (middle row), and S4D-DFouT (bottom row). Poles are shaded by their $\mathcal{H}_\infty$-norm (warmer tones indicate higher norms), emphasizing the modes that drive the system’s worst‐case gain. The dashed unit circle marks the stability boundary. S4D-Lin entanglement between decay and frequency can be observed, resulting in spiral‐like clustering of poles; S4D-Inv spreads poles more evenly around the circle; and S4D-DFouT maintains the original half‐plane distribution of poles throughout training.}

  \label{fig:polar}
\end{figure*}

\newpage

\subsection{Hyperparameters}
\label{hyperparameters}

The S4D-DFouT hyperparameter configuration we adopt in the experimentation is provided in Table~\ref{tab:table_2}.

\begin{table*}[h]
    \centering
    \scriptsize
    \caption{Hyperparameters used for the S4D-DFouT reported results. L denotes the number of layers; H, the embedding size; N, the hidden dimension; Dropout, the dropout rate; Lr, the global learning rate; Bs, the batch size; Epochs, the maximum number of training epochs; WD, weight decay; and $(\xi_{\min}$,$\xi_{\max})$, the range of decay rate values.}
    
\begin{tabular}{l c c c c c c c c c c c}
\toprule
Task & $L$ & $H$ & $N$ & Norm & Pre-norm & Dropout & Lr & Bs & Epoch & Wd & $(\xi_{\min},\xi_{\max})$ \\
\midrule
\texttt{sCIFAR}     & 6 & 128 & 64 & LN & False & 0.1    & 0.01              & 64 & 100 & 0.05 & (0.001, 0.1)  \\
\texttt{psCIFAR}     & 6 & 128 & 64 & LN & False & 0.1    & 0.01              & 64 & 100 & 0.05 & (0.001, 0.1)  \\
\midrule
\texttt{ListOps}    & 6 & 256 & 64 & BN & False & 0      & 0.01              & 50 &  40 & 0.05 & (0.001, 0.1)  \\
\texttt{Text}       & 6 & 256 & 64 & BN & True  & 0      & 0.01              & 16 &  32 & 0.05 & (0.001, 0.1)  \\
\texttt{Retrieval}  & 6 & 256 & 64 & BN & True  & 0      & 0.01              & 64 &  20 & 0.05 & (0.001, 0.1)  \\
\texttt{Image}      & 6 & 512 & 64 & LN & False & 0.1    & 0.01              & 50 & 200 & 0.05 & (0.001, 0.1)  \\
\texttt{Pathfinder} & 6 & 256 & 128 & BN & True  & 0      & 0.001             & 64 & 200 & 0.03 & (0.001, 0.1)  \\
\texttt{PathX-128}
           & 6 & 256 & 128 & BN & True  & 0      & 0.0005 & 32 &  50 & 0.05 & (0.0001, 0.1) \\
\texttt{PathX-256}
           & 6 & 256 & 128 & BN & True  & 0      & 0.0001 & 16 &  20 & 0.05 & (0.0001, 0.1) \\
\midrule
\texttt{SC35}     & 6 & 128 & 64 & BN & True  & 0      & 0.001              & 16 &  40 & 0.05 & (0.001, 0.1)  \\
\midrule
\texttt{BIDMC}     & 6 & 128 & 256 & BN & True  & 0      & 0.01              & 32 &  500 & 0.05 & (0.001, 0.1)  \\
\bottomrule
\end{tabular}

    \label{tab:table_2}
\end{table*}

\end{document}